\def\eqref#1{equation~\ref{#1}}
\def\1{\bm{1}}
\DeclareMathAlphabet{\mathsfit}{\encodingdefault}{\sfdefault}{m}{sl}
\SetMathAlphabet{\mathsfit}{bold}{\encodingdefault}{\sfdefault}{bx}{n}
\newtheorem{theorem}{Theorem}
\definecolor{deepred}{rgb}{0.6, 0, 0}
\newcommand{\colorref}[1]{\textcolor{deepred}{\ref{#1}}}
\newcommand{\colorcitep}[1]{\textcolor{deepred}{\citep{#1}}}
\title{Sampling-Efficient Test-Time Scaling:\\Self-Estimating the Best-of-$N$ Sampling\\in Early Decoding}
\author{%
  Yiming Wang$^{\alpha}$ ~~~Pei Zhang$^{\beta}$ ~~~Siyuan Huang$^{\alpha}$ ~~~Baosong Yang$^{\beta}$ \\ 
  \vspace{0.05in}
  {\bf Zhuosheng Zhang$^\alpha$} ~~~{\bf Fei Huang$^{\beta}$} ~~~{\bf Rui Wang$^{\alpha, ~\textrm{\Letter}}$}\\
  $^\alpha$School of Computer Science, Shanghai Jiao Tong University \\
  \vspace{0.05in}
  $^\beta$Tongyi Lab, Alibaba Group \\
  \small{${\textrm{\Letter}}$ : \texttt{Corresponding Author}} \\
  \small{\texttt{Email: $^\alpha$\{yiming.wang, wangrui12\}@sjtu.edu.cn}} 
}
\begin{document}

\maketitle

\vspace{-0.2in}
\begin{abstract}
\vspace{-0.1in}
Test-time scaling enhances large language model performance by allocating additional compute resources during inference. Best-of-$N$ (BoN) sampling serves as a common sampling-based scaling technique, broadening the search space in parallel to find better solutions from the model distribution. However, its cost–performance trade-off is still underexplored. Two main challenges limit the efficiency of BoN sampling:
(1) Generating $N$ full samples consumes substantial GPU memory, reducing inference capacity under limited resources.
(2) Reward models add extra memory and latency overhead, and training strong reward models introduces potential training data costs.
Although some studies have explored efficiency improvements, none have addressed both challenges at once.

To address this gap, we propose {\bf Self-Truncation Best-of-$N$ (ST-BoN)}, a decoding method that avoids fully generating all $N$ samples and eliminates the need for reward models. It leverages early sampling consistency in the model’s internal states to identify the most promising path and truncate suboptimal ones.
In terms of cost, ST-BoN reduces dynamic GPU memory usage by over 80\% and inference latency by 50\%.
In terms of cost–performance trade-off, ST-BoN achieves the same performance as Full-BoN while saving computational cost by 70\%–80\%, and under the same cost, it can improve accuracy by 3–4 points.
\end{abstract}

\begin{figure*}[htbp]
\vspace{-0.2in}
  \centering
  \includegraphics[width=1\textwidth]{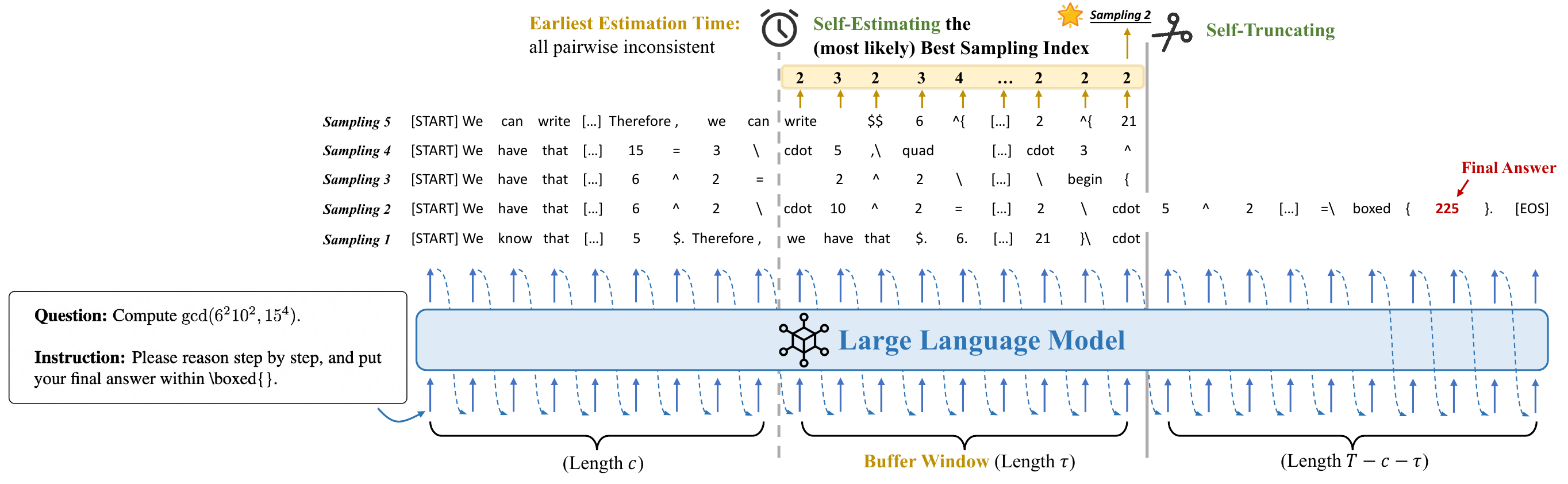}  
  \vspace{-0.2in}
  \caption{The pipeline and case of our {\bf Self-Truncation Best-of-$N$ (ST-BoN) Decoding}. From the start: 
  {\bf \textcolor{deepred}{(Step 1)}} We first let the LLM generate $N$ samplings autoregressively until the earliest estimation time $c$;
  {\bf \textcolor{deepred}{(Step 2)}} Then each sampling continues to generate $\tau$ steps beyond time $c$. During this, the LLM performs self-estimation to determine the most promising sample among the $N$ sampling, identifying the best sample through cumulative \(\tau+1\) self-estimations.
  {\bf \textcolor{deepred}{(Step 3)}} Finally, we truncate the remaining \( N-1 \) samples and proceed with generating only the best-estimated sample up to \texttt{[EOS]}.
  Parameter details of the case in this figure: $N=5$, $c = \tau = 20$, and $T = 359$.}
  \label{img:pipeline}
\vspace{-0.1in}
\end{figure*}

\vspace{-0.1in}

\section{Introduction}
\label{sec:intro}

Large Language Models (LLMs) possess strong generative and reasoning capabilities after extensive training on large-scale corpora \colorcitep{brown2020language,achiam2023gpt,dubey2024llama}, and the excellent solution to a problem is typically embedded within the model distribution \colorcitep{li2024common,chen2024not,wang2025polymath,wen2023few,luo2024dr}.
However, autoregressive decoding focuses on locally optimal solutions, overlooking more promising thinking paths. 
Hence, the test-time scaling technique \colorcitep{snell2024scaling} is proposed to help LLMs expand their thinking space by adding compute during inference.

Best-of-$N$ (BoN) \colorcitep{touvron2023llama,lightman2023let,snell2024scaling} serves as a widely used sampling-based scaling paradigm.
By sampling $N$ responses from the LLM and selecting the best one with a re-ranking strategy at inference time, BoN fully leverages the potential ability within the model distribution.
The core of BoN sampling lies in scoring and re-ranking multiple candidates. A well-known method is self-consistency \colorcitep{wang2022self}, which selects the most frequent answer as the final solution.
Furthermore, with a trained reward model (RM) \colorcitep{wang2024interpretable,wang2024math,skyworkopeno12024}, BoN sampling can handle richer tasks by scoring candidates and selecting the one with the highest score, aiming to increase test-time computational costs regardless of expense to enhance performance \colorcitep{snell2024scaling}.

Beyond performance, the issue of cost-performance trade-offs in BoN sampling has been less explored.
Under existing paradigms, BoN sampling faces two key challenges that hinder efficient deployment:
(i) {\it Full Generation Overhead}: Traditional BoN requires fully generating all $N$ samples, referred to as {\bf Full-BoN} in this paper. Although GPU parallelization can partially mitigate time latency, {\it additional memory overhead is unavoidable}, especially for complex reasoning with long generated sequences.
(ii) {\it Limitations of Reward Models}: RMs are effective, but {\it occupy additional memory and time costs}. Meanwhile, training a strong RM is costly due to the scarcity of high-quality feedback data, and their domain-specific nature restricts generalizability across tasks ({\it e.g.}, from math to open-ended QA).

Some efforts have been made to improve the efficiency of BoN sampling by using RMs to score and discard sampling prefixes \colorcitep{sun2024fast}, or to branch after pruning \colorcitep{li2024cascade,qiu2024treebon}, thereby reducing some memory overhead of full generation.
However, RM inference still incurs non-negligible latency, and sharing GPU memory with generators further restricts caching capacity. Moreover, the inherent limitations of RMs make these methods less suitable for objective tasks like reasoning (see \textcolor{deepred}{Appendix \colorref{sec:related-work-comp}}).
Therefore, {\it no existing method overcomes both challenges at once to achieve deeper efficiency optimizations.}

To fill this gap, we introduce a novel decoding technique called {\bf S}elf-{\bf T}runcation {\bf B}est-{\bf o}f-$N$ ({\bf ST-BoN}).
Inspired by the unsupervised advantage of consistency strategies \colorcitep{wang2022self}, we first theoretically demonstrate the feasibility of foreshadowing the final consistency using sampling consistency in early decoding. Then, we design an internal consistency measure that uses latent space information \colorcitep{wang2024latent} to identify and preserve the most promising sample among the \( N \) samples in early decoding.
This framework allows us to effectively self-estimate and truncate suboptimal samplings in early decoding without relying on reward model intervention, thereby saving memory and speeding up generation.

The detailed pipeline of ST-BoN is illustrated in \textcolor{deepred}{Figure \colorref{img:pipeline}}, it makes the following contributions\footnote{Code and data are available in \url{https://github.com/Alsace08/ST-BoN}.}:
\vspace{-0.05in}
\begin{itemize}[leftmargin=0.4cm]
    \item {\it Cost:} Compared to Full-BoN, ST-BoN truncates suboptimal samples in early decoding, thereby freeing up memory and {\bf reducing the dynamic GPU memory load by over 80\%}. 
    This early truncation also accelerates later generations, {\bf reducing inference latency by up to 50\%}. Notably, these efficiency gains are achieved without accounting for the overhead of extra reward models.
    \item {\it Cost-Performance Trade-offs:} We conduct extensive experiments to demonstrate that ST-BoN strikes a strong balance between cost and performance. {\bf \textit{When reaching the performance of Full-BoN at the certain $N$ values}, ST-BoN can save computational cost by 70\% to 80\%}. Also, {\bf \textit{when consuming similar computational costs}, ST-BoN improves accuracy by 3 to 4 points}. In addition, ST-BoN is applicable across a wide range of domains. These results show that ST-BoN can provide a flexible solution to balance cost and performance when faced with limited resources.
\end{itemize}





\section{Preliminary}
\label{sec:preliminary}

\paragraph{Autoregressive Decoding in Language Models.}
Let $p_{\theta}$ represent a language model. For a given prompt and question, $p_{\theta}$ will generate a token sequence $Y_{1:T}=y_1y_2...y_T$ autoregressively, where $T$ is the generation length and $y_T = $ \texttt{[EOS]}. Each token $y_t$ is sampled as follows:
\begin{equation}
    y_t \sim p_{\theta}(\cdot | \text{prompt}, \text{question}, y_{\prec t}) \in \mathbb{R}^{|\mathcal{V}|},
\end{equation}
where $\mathcal{V}$ is the vocabulary.
In greedy decoding, the model selects the token with the highest probability of $p_{\theta}$ to generate $y_t$. In contrast, sampling decoding typically uses multinomial sampling, such as top-$k$ \colorcitep{fan2018hierarchical} or top-$p$ \colorcitep{holtzman2019curious}, to let models sample $y_t$ from $p_{\theta}$ after truncation to generate diverse sequences.

\paragraph{Best-of-$N$ Sampling.}
As the name suggests, Best-of-$N$ sampling involves sampling $N$ sequences to find the best one. 
Let $Y^1, Y^2, \ldots, Y^N$ be $N$ independent sampling sequences, then each sample $Y^i$ is assigned a scalar score $S(Y^i)$, and the best sample index is identified as $\arg \max_i \{S(Y^i)\}_{i=1}^N$.

The simplest scoring approach is to use an externally trained reward model.
If avoiding the intervention of reward models, the {\bf consistency strategy} is the most common unsupervised alternative, introduced by self-consistency \colorcitep{wang2022self}, which found a positive correlation between response consistency and accuracy; that is, when multiple reasoning paths lead to the same answer, confidence in its correctness increases.
In self-consistency, the exact answer \( A^i \) is extracted from \( Y^i \), and the consistency score \( S(Y^i) \) is computed based on the frequency with which \( A^i \) co-occurs with other answers:
\begin{equation}
    S(Y^i) = \frac{\sum_{j=1, j\neq i}^N \mathbb{I}(A^i=A^j)}{N-1},
\end{equation}
where $\mathbb{I}(\cdot)$ is the indicator function. 
More generally, ``consistency'' can be any continuous measure. For example, \colorcitep{jain2024lightweight} suggested using semantic similarity to assess sample consistency in open-ended tasks, calculated through the distance between sentence embeddings.

\section{ST-BoN: Self-Truncation Best-of-$N$ Decoding}
\label{sec:method}

\subsection{Theoretical Support: Early Consistency Foreshadows Final Consistency}
\label{sec:theory}

We aim to overcome the efficiency bottlenecks of both full generation and reward models. Without a reward model, consistency strategies offer a promising unsupervised approach to select optimal samples.
However, existing consistency methods require full generation. Naturally, we hope that early consistency can foreshadow the final consistency: {\it a sample closer to others in early decoding is more likely to reach the correct answer when the decoding ends}.
If this holds, such consistency propagation can be utilized to estimate the most promising sampling in early decoding.

Intuitively, considering the cumulative effect of autoregressive generation, sampling sequences that already exhibit large differences in early decoding is less likely to converge closely by the end. Theoretically, this can be modeled as probabilistic monotonicity: {\it as early consistency increases, the probability lower bound for final consistency surpassing a given constant also increases.}

Consistency can be mathematically converted into a distance measure, where higher consistency implies a smaller distance.
Let $\mathcal{D}(\cdot, \cdot): Y \times Y \rightarrow \mathbb{R}^+$ be an arbitrary metric function defined over a metric space, used to quantify the distance between two partial sequences $Y_{1:t}^i$ and $Y_{1:t}^j$ at time $t$.
We demonstrate that early consistency can probabilistically enhance the likelihood of final consistency. Specifically, let \( Y^1 \) denote the primary sampling. At an early decoding time $t$ ($t < T$), if the distance between the partial sequence \( Y_{1:t}^1 \) and the other $N-1$ partial sequences \( \{ Y_{1:t}^i \}_{2 \leq i \leq N} \) becomes smaller under a given metric \( \mathcal{D} \), the lower bound of the probability that the distance between the final full sequence \( Y_{1:T}^1 \) and the other $N-1$ full sequences \( \{ Y_{1:T}^i \}_{2 \leq i \leq N} \) is less than a certain value will increase, meaning it is more likely to meet a specific consistency requirement.

\begin{theorem}[]
\label{theorem:main}
Let $d_{t}^i = \mathcal{D}(Y_{1:t}^1, Y_{1:t}^i)$ denote the distance between sampling sequence $Y_{1:t}^1$ and the $i$-th sampling sequence $Y_{1:t}^i$ at decoding time $t$.
For a given constant $\epsilon$, there exist a constant $\Gamma$ such that:
\begin{equation*}
    \Pr \left[S_T \leq \epsilon ~|~ S_t \right] \geq 1 - \frac{\Gamma^{T-t}}{\epsilon}S_t,
\end{equation*}
where $1 \leq t < T$, $S_T = \sum_{i=2}^N d_{T}^i$ and $S_t = \sum_{i=2}^N d_{t}^i$.
\end{theorem}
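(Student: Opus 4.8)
The plan is to reduce the statement to a single first-moment (Markov) estimate, and then bound the conditional expectation $\E[S_T \mid S_t]$ by $S_t$ times a factor that grows at most geometrically in the number of remaining decoding steps $T-t$. Since $S_T = \sum_{i=2}^N d_T^i$ is a sum of values of the metric $\mathcal{D}$, it is non-negative, so Markov's inequality applied conditionally on $S_t$ gives
\begin{equation*}
\Pr\left[S_T > \epsilon \mid S_t\right] \leq \frac{\E\left[S_T \mid S_t\right]}{\epsilon},
\end{equation*}
and taking complements yields $\Pr[S_T \leq \epsilon \mid S_t] \geq 1 - \E[S_T \mid S_t]/\epsilon$. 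Thus the whole theorem follows once I exhibit a constant $\Gamma$ with $\E[S_T \mid S_t] \leq \Gamma^{T-t} S_t$; note that such a $\Gamma$ will not even need to depend on $\epsilon$, which is stronger than the existential claim requires.

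To obtain that bound I would establish a one-step multiplicative drift inequality and iterate it. Let $\mathcal{F}_t$ denote the information generated by the $N$ partial sequences up to time $t$, so that $S_t$ is $\mathcal{F}_t$-measurable. The core lemma to prove is that appending one autoregressively sampled token inflates each pairwise distance by at most a bounded factor in expectation, namely
\begin{equation*}
\E\left[d_{t+1}^i \mid \mathcal{F}_t\right] \leq \Gamma \, d_t^i \quad \text{for every } i \in \{2,\dots,N\}.
\end{equation*}
Summing over $i$ gives $\E[S_{t+1} \mid \mathcal{F}_t] \leq \Gamma S_t$, and iterating from $t$ up to $T$ through the tower property of conditional expectation produces $\E[S_T \mid S_t] \leq \Gamma^{T-t} S_t$, which I then substitute into the Markov bound above to finish.

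The main obstacle is justifying the one-step drift inequality, since it is the only place where the autoregressive dynamics and the structure of $\mathcal{D}$ genuinely enter. The argument should use that $Y_{1:t+1}^1$ and $Y_{1:t+1}^i$ extend the prefixes $Y_{1:t}^1$ and $Y_{1:t}^i$ by tokens drawn from $p_\theta$ conditioned on those prefixes: when the prefixes are already close, the two next-token distributions are close, so the additional displacement from the new tokens is controllable relative to the current distance $d_t^i$. Making this precise calls for a triangle-inequality decomposition of $d_{t+1}^i$ together with a Lipschitz/boundedness assumption guaranteeing that the per-step expected increment is proportional to $d_t^i$; the finiteness of that proportionality constant is exactly what defines $\Gamma$. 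A secondary technical point is that $\Gamma$ must be taken as a uniform worst-case bound over all decoding steps and histories for the iteration to $\Gamma^{T-t}$ to be valid, and one should verify the degenerate endpoint $S_t = 0$, in which the drift bound forces $S_T = 0$ almost surely and the inequality holds trivially.
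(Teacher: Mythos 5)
Your proposal is correct and follows essentially the same route as the paper's proof: Markov's inequality applied conditionally, combined with a one-step multiplicative drift bound $\E[d_{t+1}^i \mid d_t^i] \le \Gamma d_t^i$ (which the paper derives from a local Lipschitz assumption on next-token distributions plus a bounded-increment assumption on $\mathcal{D}$, via maximal coupling, giving $\Gamma = 1+LM$) iterated to $\E[S_T \mid S_t] \le \Gamma^{T-t} S_t$. The only part you leave as an obstacle --- making the drift inequality precise --- is exactly the part the paper handles by introducing those two structural assumptions, so your sketch matches the intended argument.
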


\begin{proof}[Proof Sketch]
We assume (i) local Lipschitz continuity, {\it i.e.}, for any two prefixes, the next-token distributions differ in TV distance by at most \(L\) times their sequence distance, and (ii) bounded increments, so appending a single token can increase the distance by at most \(M\). Under maximal coupling, two continuations diverge at step \(t+1\) with probability at most \(L d_t^i\). Thus the expected distance evolves as \(\mathbb{E}[d_{t+1}^i\mid d_t^i]\le d_t^i+M(L d_t^i)=(1+LM)\,d_t^i\). Summing over \(i\) yields \(\mathbb{E}[S_{t+1}]\le \Gamma S_t\) with \(\Gamma=1+LM\). Iterating this recursion shows that deviations grow at most exponentially: \(\mathbb{E}[S_T]\le \Gamma^{T-t} S_t\). Finally, applying Markov’s inequality converts the expectation bound into a high-probability guarantee: \(\Pr[S_T\ge \epsilon \mid S_t]\le \Gamma^{T-t}S_t/\epsilon\), implying the claim in \textcolor{deepred}{Theorem \ref{theorem:main}}.
\end{proof}

The detailed proof is shown in \textcolor{deepred}{Appendix} \colorref{sec:proof}.
As \( S_t \) increases, the probability lower bound of \( S_T \leq \epsilon \) also increases. This confirms the potential for early consistency that foreshadows the final consistency.

\subsection{Method: Early Internal Consistency Estimates the Most Promising Sampling}
\label{sec:strategy}

We have proven the high-probability foreshadow from early consistency to final consistency, providing theoretical feasibility for utilizing sampling consistency in early decoding to estimate the most promising path.
Upon this, we are ready to present the main ST-BoN algorithm, which involves defining an effective measure of sampling consistency by designing a suitable metric function $\mathcal{D}$.

\vspace{-0.1in}
\paragraph{When Does Self-Estimation Start?}
Before defining the measure, we first identify the earliest self-estimation time \( t \) for early consistency.
We aim to achieve ultimate efficiency improvement, so the earliest time can occur when {\it all samplings become pairwise inconsistent}. Let this special time be $c$, it satisfies the following condition:

\vspace{-0.1in}
\begin{equation}
\label{eq:early-time}
\begin{aligned}
    \sum_{i,j,i \neq j} \mathbb{I} \left(Y^i_{1:c} = Y^j_{1:c} \right) = 0 
    ~~\And~~  \forall t < c, \sum_{i,j,i \neq j} \mathbb{I} \left(Y^i_{1:t} = Y^j_{1:t} \right) > 0.
\end{aligned}
\end{equation}

\vspace{-0.1in}
In practical inference, we enable GPU parallelization and perform pairwise sequence-equality checks at each time. If any sample reaches \texttt{[EOS]} during this process, it is terminated immediately, and the current time step is recorded as $c$. In practice, this situation is very rare, as the condition in \textcolor{deepred}{Eq.}\colorref{eq:early-time} is typically satisfied early on, as shown in \textcolor{deepred}{Figure} \colorref{img:efficiency}.

\vspace{-0.1in}
\paragraph{\textcolor{deepred}{Main Algorithm:} How is Self-Estimation Done?}
Now we start to define an effective measure of sampling consistency.
At such an early decoding stage, semantic information is limited, and textual differences between samplings are minimal, making it hard to distinguish them solely based on output text.
Therefore, we consider utilizing the more informative hidden states of LLMs to represent early sampling sequences through the model's internal information.

Chain-of-Embedding (CoE) offers a promising idea, which is the representation technique in the latent space \colorcitep{wang2024latent} that links all hidden states from input to output, capturing the \textit{\textbf{latent thinking path}} from reading to writing.
It extracts hidden states across layers to form a progressive chain.
Consider a model with \( L \) layers and an output length \( T \), then the hidden state at layer \( l ~(0 \leq l \leq L) \) and position \( t ~(1 \leq t \leq T) \) is represented as \( \boldsymbol{z}_t^l \). The sentence embedding for layer \( l\) is calculated as \( \boldsymbol{h}_l^T = \frac{1}{T} \sum_{t=1}^T \boldsymbol{z}_t^l \) \colorcitep{ren2022out,wang2024embedding}.
Thus, CoE is defined as a progressive chain of sentence embeddings: \( \boldsymbol{H}_{1:T} := \boldsymbol{h}_0^T \rightarrow \boldsymbol{h}_1^T \rightarrow \cdots \rightarrow \boldsymbol{h}_L^T \).
The CoE feature $\mathcal{F}(\boldsymbol{H})$ is quantified as:

\vspace{-0.15in}
\begin{equation}
    \begin{aligned}
        \mathcal{F}(\boldsymbol{H}_{1:T}) = \frac{1}{L} \cdot \sum_{l=0}^{L-1} \left( \frac{M(\boldsymbol{h}_l, \boldsymbol{h}_{l+1})}{M(\boldsymbol{h}_0, \boldsymbol{h}_{L})} - \frac{A(\boldsymbol{h}_l, \boldsymbol{h}_{l+1}) }{A(\boldsymbol{h}_0, \boldsymbol{h}_{L})} \right),
    \end{aligned}
    \label{eq:coe-feature}
\end{equation}

\vspace{-0.1in}
where $M(\bm{h}_i, \bm{h}_j) = ||\bm{h}_i - \bm{h}_j||_2$ and $A(\bm{h}_i, \bm{h}_j) = \arccos \left( 
\bm{h}_i^{\top} \cdot \bm{h}_j / \left( ||\bm{h}_i||_2 \cdot ||\bm{h}_j||_2 \right) \right)$.
A larger $\mathcal{F}(\boldsymbol{H}_{1:T})$ indicates a greater curvature in the latent thinking path when generating $Y_{1:T}$.

Using the CoE measure, we define the distance \( \mathcal{D}(Y_{1:c}^i, Y_{1:c}^j) \) between two partial sequences \( Y_{1:c}^i \) and \( Y_{1:c}^j \) at time \( c \) as the squared difference of their CoE features. The consistency score \( S(Y_{1:c}^i) \) is then calculated as the average distance between the \( Y_{1:c}^i \) and the other \( N-1 \) samples. Specifically:

\vspace{-0.15in}
\begin{equation}
    \mathcal{D}(Y_{1:c}^i, Y_{1:c}^j) = \left( \mathcal{F}(\boldsymbol{H}^i_{1:c}) - \mathcal{F}(\boldsymbol{H}^j_{1:c}) \right)^2, ~~~~~
    S(Y^i_{1:c}) = \frac{1}{N-1} \sum_{j=1, j\neq i}^N \mathcal{D}(Y_{1:c}^i, Y_{1:c}^j).
    \label{eq:aggregation}
\end{equation}

\vspace{-0.1in}
A smaller \( S(Y^i_{1:c}) \) indicates higher consistency between the \( i \)-th sample and the other samples. After computing scores for all \( N \) samples, we rerank them and choose the sample with the lowest score: \( i_c = \arg \min_i \{S(Y^i_{1:c})\}_{i=1}^N \), making the \( i_c \)-th sample the optimal estimate at time \( c \).

\vspace{-0.1in}
\paragraph{How Long Does Self-Estimation Last?}
Performing self-estimation at a single moment can introduce randomness, since pairwise sequence differences only begin to emerge at time $c$, and these differences may not be substantial.
To migrate this, we define a {\bf Buffer Window}: {\it the LLM continues to generate for an additional \(\tau\) steps beyond time \( c\), where \(\tau\) is the window length.}
We set $\tau \propto c$, {\it i.e.}, $\tau = mc$, where $m$ is a proportional constant. The intuition is that a smaller $c$ indicates an early divergence between samplings, which reflects greater randomness, so a smaller window may suffice to capture later differences. In contrast, a larger $c$ suggests lower randomness, which requires a larger window to capture sufficient later differences.
At each time $c'$ within the buffer window, we obtain the optimal estimation $i_{c'} = \arg \min_i \{s(Y^i_{1:c'})\}_{i=1}^N$ using \textcolor{deepred}{Eq.}\colorref{eq:aggregation}. This produces $\tau + 1$ optimal estimation $\{ i_{c'} \}_{c'=c}^{c+\tau}$, and the final optimal estimation $i_{\text{final}}$ is determined by selecting the most frequent one:

\vspace{-0.15in}
\begin{equation}
    \begin{aligned}
        \textrm{Count}(i) = \sum_{t=c}^{c+\tau} \mathbb{I}(i_t = i) ~~(1 \leq i \leq N), 
        ~~~~~~~~~~~i_{\text{final}} = \arg \max_i \{\textrm{Count}(i)\}_{i=1}^N.
    \end{aligned}
    \label{eq:vote}
\end{equation}
\vspace{-0.2in}

\section{Cost Analysis: Towards Efficient Best-of-$N$ Sampling}
\label{sec:cost}

Compared to Full-BoN, ST-BoN reduces significant computational cost during inference, which is reflected in two aspects: {\bf Space} and {\bf Time}. The space is related to GPU memory overhead, and the time is reflected in inference latency. We will analyze the two parts between ST-BoN and Full-BoN.

\subsection{Inference Overhead: GPU Memory}

\begin{wrapfigure}{r}{0.46\columnwidth}
\vspace{-0.17in}
  \centering
  \includegraphics[width=0.46\columnwidth]{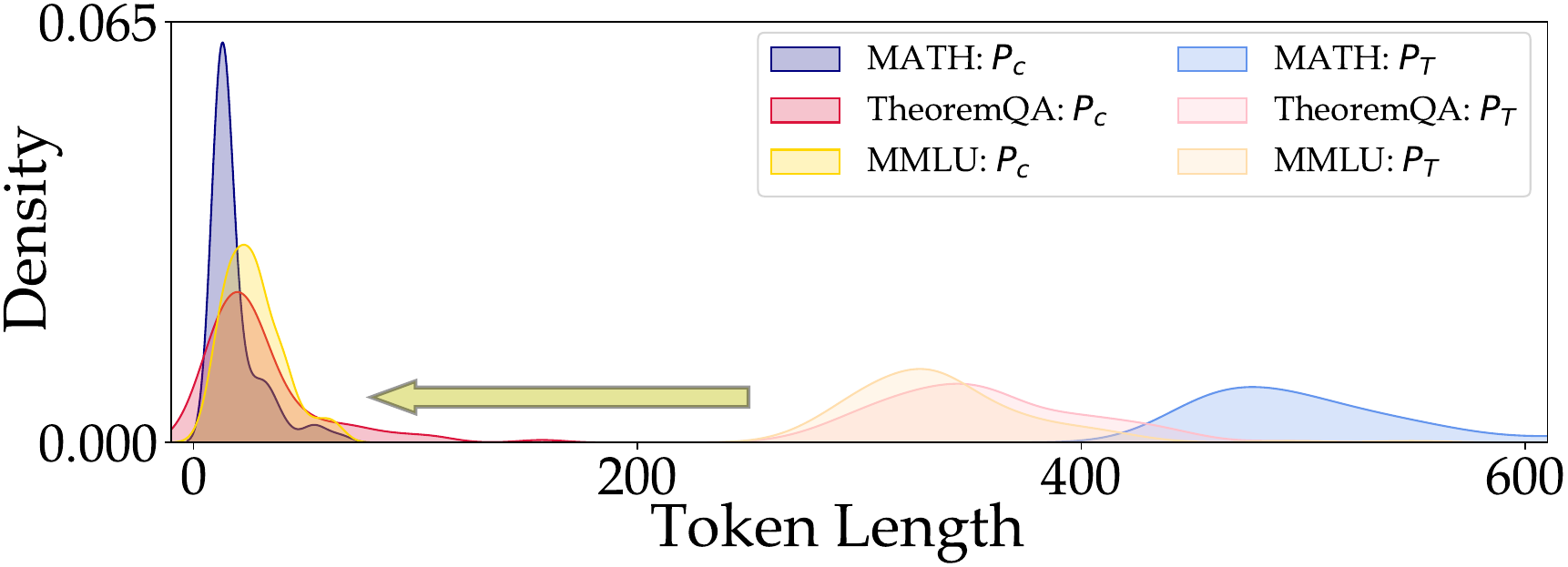}  
  \vspace{-0.15in}
  \caption{The distributions $P_c$ and $P_T$ of the earliest estimation time $c$ and the full generation length $T$ across different datasets.}
  \label{img:token}
\vspace{-0.2in}
\end{wrapfigure}
During model inference, GPU memory overhead is mainly impacted by the {\it model weights} and the {\it KV cache}. While model weight loading is necessary, the KV cache is the main memory bottleneck. Autoregressive parallel sampling can quickly cause Out-of-Memory (OOM) issues due to KV cache accumulation, with OOM events tied to peak memory overhead. Therefore, we assess ST-BoN's memory optimization by comparing its reduction in peak memory overhead to that of Full-BoN.

Assuming that $N$ samples are generated in parallel on the GPU ({\it i.e.}, the batch size is $N$), the KV cache usage is linearly related to the sampling size $N$ and the generation length $T$.
In the dataset $\mathcal{D}$, for inputs $X \sim \mathcal{D}$ and outputs $Y \sim p_{\theta}(\cdot|X)$, we let $P_T$ be the length distribution of full generation, and $P_c$ be the distribution of the earliest estimation time $c$.
When excluding basic model weight memory:
(i) For Full-BoN without reward models, peak memory occurs at time $T$ with batch size $N$. Note that if a reward model is added, the basic memory overhead of Full-BoN will increase, further reducing the available inference space.
(ii) For ST-BoN, the peak memory can occur at time $c$ with batch size $N$, or at time $T$ with batch size $1$.
We set $m = 1$ in our main experiments, so that $\tau = c$, and the memory reduction rate $R_{\mathcal{D}}$ on dataset $\mathcal{D}$ can be approximated as:
\begin{equation}
    \begin{aligned}
        R_{\mathcal{D}} = 1 - \frac{\max \left\{ N \cdot (\mathbb{E}_{c \sim P_{c}} [c] + \tau), \mathbb{E}_{T \sim P_{T}} [T] \right\}}{N \cdot \mathbb{E}_{T \sim P_{T}} [T]}
        = 1 - \max \left\{ 2 \cdot \frac{\mathbb{E}_{c \sim P_{c}} [c]}{\mathbb{E}_{T \sim P_{T}} [T]}, \frac{1}{N} \right\}.
    \end{aligned}
    \label{eq:efficiency}
\end{equation}

\begin{wrapfigure}{r}{0.52\columnwidth}
\vspace{-0.22in}
  \centering
  \includegraphics[width=0.52\columnwidth]{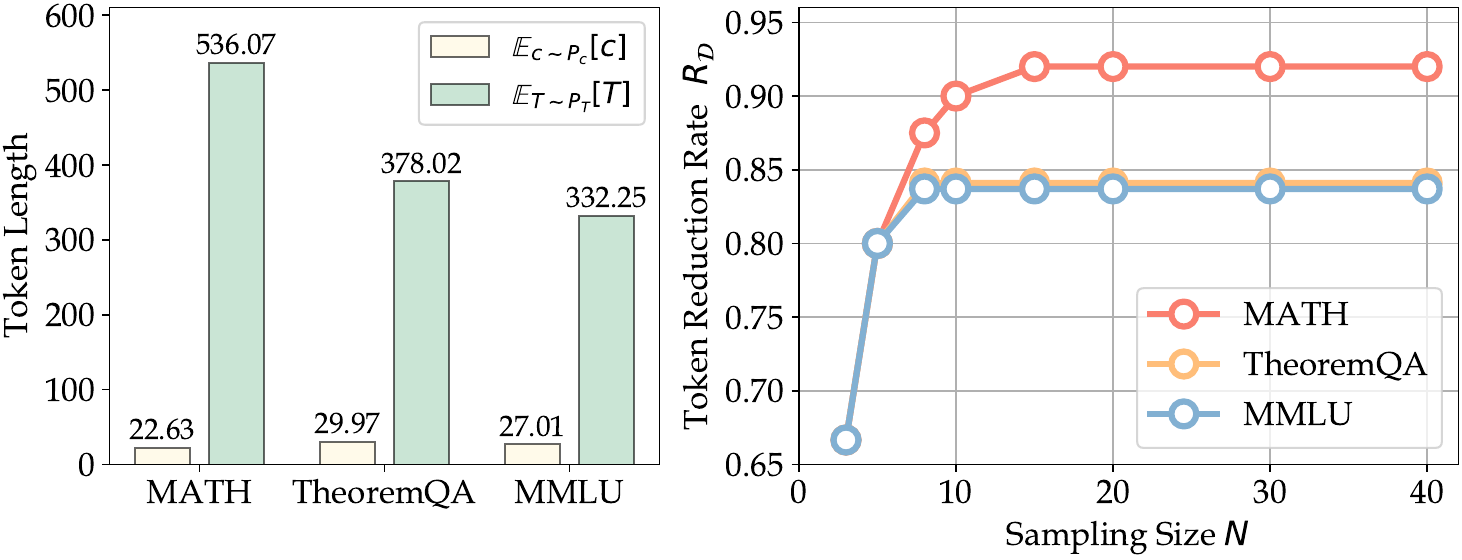}  
  \vspace{-0.2in}
  \caption{{\bf (Left)} The mathematical expectations $\mathbb{E}_{c \sim P_{c}} [c]$ and $\mathbb{E}_{T \sim P_{T}} [T]$ of the earliest estimation time $c$ and the full generation length $T$ across different datasets; {\bf (Right)} The computed memory reduction rate $R_{\mathcal{D}}$ under different sampling sizes $N$.}
  \label{img:efficiency}
\vspace{-0.27in}
\end{wrapfigure}
We conduct statistical experiments on the distributions $P_c$ and $P_T$ using the Llama3-8B-Instruct model across three datasets: MATH \colorcitep{hendrycks2measuring}, TheoremQA \colorcitep{chen2023theoremqa}, and MMLU \colorcitep{hendrycks2020measuring}.
\textcolor{deepred}{Figure \colorref{img:token}} visualizes the respective distributions, showing a clear shift to the left of $P_c$ compared to $P_T$. This indicates that the earliest estimation time $c$ appears much earlier than the completion of sampling.

Furthermore, we calculate $\mathbb{E}_{c \sim P_{c}} [c]$ and $\mathbb{E}_{T \sim P_{T}} [T]$, set various sampling sizes $N$, and use \textcolor{deepred}{Eq.\colorref{eq:efficiency}} to find $R_{\mathcal{D}}$. \textcolor{deepred}{Figure \colorref{img:efficiency}} illustrates that {\bf $R_{\mathcal{D}}$ easily surpasses 80\%} for $N \geq 5$. According to \textcolor{deepred}{Eq.\colorref{eq:efficiency}}, the upper limit of $R_{\mathcal{D}}$ is $1 - 2 \mathbb{E}_{c \sim P_{c}} [c] / \mathbb{E}_{T \sim P_{T}} [T]$, and increases with the sampling size $N$ until it reaches this limit.
In addition, significant variations in $\mathbb{E}_{T \sim P_{T}} [T]$ compared to minimal variations in $\mathbb{E}_{c \sim P_{c}} [c]$ make domain optimization primarily dependent on $\mathbb{E}_{T \sim P_{T}} [T]$.
For example, the MATH dataset has a longer full generation, leading to a higher optimization upper limit than other domains. {\bf In summary, $R_{\mathcal{D}}$ increases with the sampling size $N$ and the complexity of the task}.

\subsection{Inference Latency: Wall-Clock Time}

Inference latency is another critical aspect that affects computational cost. 
In Full-BoN, while $N$-sampling can run in parallel on the GPU, it still introduces more delay compared to single-sample inference. Our ST-BoN method leverages early self-truncation, enabling LLMs to resume single-sample generation after time $c+\tau$. However, sequence equality checks and self-estimation within the buffer window require serial vector processing, potentially adding latency. To address this, we will evaluate inference latency using wall-clock time as the measure.

\begin{wrapfigure}{r}{0.56\columnwidth}
\vspace{-0.15in}
  \centering
  \includegraphics[width=0.56\textwidth]{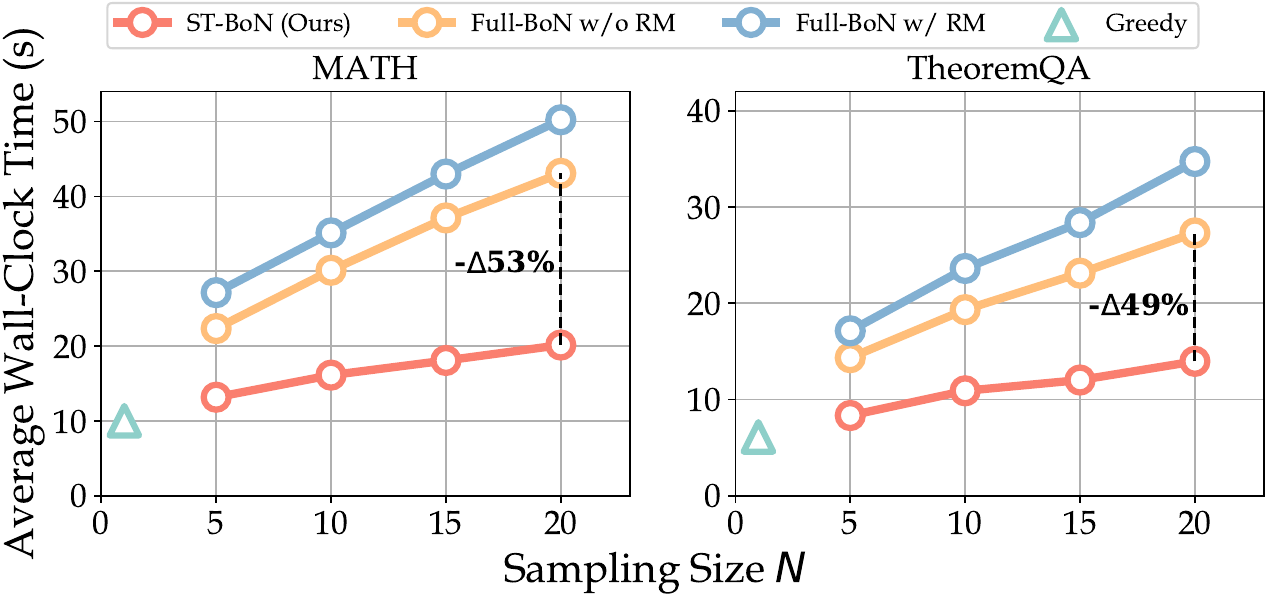}  
  \vspace{-0.2in}
  \caption{The average wall-clock time (seconds) comparisons in two datasets with different sampling sizes $N$.}
  \label{img:latency}
\vspace{-0.1in}
\end{wrapfigure}
We perform statistical experiments using the Llama3-8B-Instruct model on the MATH and TheoremQA datasets. For Full-BoN, we assess scenarios with and without a reward model, highlighting the extra time needed to load and run the reward model. \textcolor{deepred}{Figure \colorref{img:latency}} presents the results, demonstrating that ST-BoN significantly reduces inference latency compared to Full-BoN. {\bf Overall, wall-clock time latency drops by nearly 50\%, with the reduction ratio growing as $N$ increases.} This efficiency stems from the self-truncating operation, which compensates for delays from self-estimation. Additionally, incorporating a reward model into Full-BoN further increases inference latency.

\vspace{-0.05in}
\section{Experiments}
\label{sec:experiments}

\vspace{-0.05in}
\subsection{Setup}
\label{sec:setup}

\paragraph{Datasets.}
We select four datasets for objective tasks: MATH \colorcitep{hendrycks2measuring}, TheoremQA \colorcitep{chen2023theoremqa}, GPQA \colorcitep{rein2023gpqa}, and MMLU \colorcitep{hendrycks2020measuring}. They span a range of domains, including mathematics, theorem application, science reasoning, and general knowledge, and present a significant difficulty level.
We also select two datasets for subjective tasks: CNNDM \colorcitep{nallapati2016abstractive} and AlpacaFarm \colorcitep{dubois2024alpacafarm}. The former is the summarization task about the open-ended generation, and the latter is the instruction-following task about the preference alignment. We adopt the revised CNNDM \colorcitep{wang2023element} for its higher-quality gold summaries.

\vspace{-0.12in}
\paragraph{Models.}
We mainly adopt 7B+ parameter models with the Zero-Shot-CoT generation paradigm \colorcitep{wei2022chain,kojima2022large}, including Qwen2.5-7B-Instruct \colorcitep{yang2024qwen2}, Llama3-8B-Instruct \colorcitep{dubey2024llama}, and Mistral-7B-Instruct-v0.3 \colorcitep{jiang2023mistral}.
We also test the Qwen2.5-72B-Instruct \colorcitep{yang2024qwen2} to validate generalization across model scales.

\vspace{-0.12in}
\paragraph{Baselines.}
We compare our ST-BoN decoding with two Full-BoN decoding paradigms: Full-BoN without a reward model (Full-BoN w/o RM) and Full-BoN with a reward model (Full-BoN w/ RM).
In objective tasks: Full-BoN w/o RM uses the classic self-consistency through majority voting \colorcitep{wang2022self}, and Full-BoN w/ RM employs the process reward model (PRM) Skywork-o1-Open-PRM-7B \footnote{\url{https://huggingface.co/Skywork/Skywork-o1-Open-PRM-Qwen-2.5-7B}} \footnote{We use PRMs that are as strong and generic as possible, selecting one of the most advanced for our main experiments, and \textcolor{deepred}{Appendix \colorref{sec:more-reward-model}} includes results with other PRMs to validate the generality of our conclusions.}.
In subjective tasks: Full-BoN w/o RM applies the modified self-consistency through semantic similarity \colorcitep{jain2024lightweight}, and Full-BoN w/ RM uses the ArmoRM-Llama-3-8B \colorcitep{wang2024interpretable} for preference rewards.

\vspace{-0.12in}
\paragraph{Implementation.}
We use the sampling strategy combining top-$k$ \colorcitep{fan2018hierarchical}, top-$p$ \colorcitep{holtzman2019curious}, and temperature $T$ \colorcitep{hinton2015distilling}, with $k=20$, $p=0.95$, and $T=0.7$. The buffer window length $\tau$ is set to $c$ in our main experiments. Detailed hyperparameter analysis is provided in \textcolor{deepred}{Section} \colorref{sec:ablation} and \colorref{sec:robustness}.
All baselines are implemented using the HuggingFace Transformers \colorcitep{vaswani2017attention} library's \texttt{model.generate()} function with KV cache. All experiments are run on 80G A100 GPUs, with the GPU number varying based on $N$.

\vspace{-0.12in}
\paragraph{Evaluation.}
We mainly evaluate {\bf the balance between computational cost and performance}. The computational cost lies in two dimensions in \textcolor{deepred}{Section \colorref{sec:cost}}: memory and time. We use the cost of greedy decoding as the baseline, and the cost of each paradigm is calculated as follows:
\begin{itemize}[leftmargin=14px]
    \vspace{-0.05in}
    \item {\it Memory Cost} $\mathcal{M}_{\text{cost}}$: Let $M_{\text{bm}}$, $M_{\text{peak}}$, and $M_{\text{rm}}$ denote the memory usage of the base LLM weights, the peak memory overhead during inference, and the reward model weights, respectively, for each paradigm. Let $M_{\text{peak}}^{\text{greedy}}$ denote the peak memory during greedy decoding:
    
    \vspace{-0.1in}
    \begin{equation}
        \mathcal{M}_{\text{cost}} :=
        \begin{cases} 
        (M_{\text{bm}}+M_{\text{peak}}) ~/~ (M_{\text{bm}}+M_{\text{peak}}^{\text{greedy}}), & \text{\small ST-BoN \& Full-BoN w/o RM}, \\
        (M_{\text{bm}}+M_{\text{peak}}+M_{\text{rm}}) ~/~ (M_{\text{bm}}+M_{\text{peak}}^{\text{greedy}}), & \text{\small Full-BoN w/ RM}.
        \end{cases}
    \end{equation}
    
    \vspace{-0.06in}
    \item {\it Time Cost} $\mathcal{T}_{\text{cost}}$: $\mathcal{T}_{\text{cost}}$ is denoted as the ratio of the wall-clock time of each paradigm to that of greedy decoding. For Full-BoN w/ RM, we consider that both the generator and the verifier are persistently stored in GPU memory, and thus ignore the loading time of the RM.
\end{itemize}
The {\bf overall computational cost} $\mathcal{A}_{\text{cost}}$ is given by:
\begin{equation}
    \mathcal{A}_{\text{cost}} := \mathcal{M}_{\text{cost}} \cdot \mathcal{T}_{\text{cost}}.
\end{equation}
Regarding {\bf performance}, {\it Accuracy} is used as a metric in reasoning scenarios. To reduce random error, we follow \colorcitep{guo2025deepseek} to report the average results over four evaluation runs.
In open-ended scenarios, {\it Rouge-L} \colorcitep{lin2004rouge} and {\it Human Scoring} are applied for the summarization, and {\it Win Rate (WR)} \colorcitep{zheng2023judging} judged by GPT-4o-Turbo \colorcitep{achiam2023gpt} is used for the instruction-following. Metric details are shown in \textcolor{deepred}{Appendix} \colorref{sec:evaluation}.

\begin{figure}[t]
\vspace{-0.2in}
    \centering
    \captionsetup[subfigure]{aboveskip=1pt, belowskip=1pt} 

    \begin{subfigure}[b]{1\textwidth}
        \includegraphics[width=\textwidth]{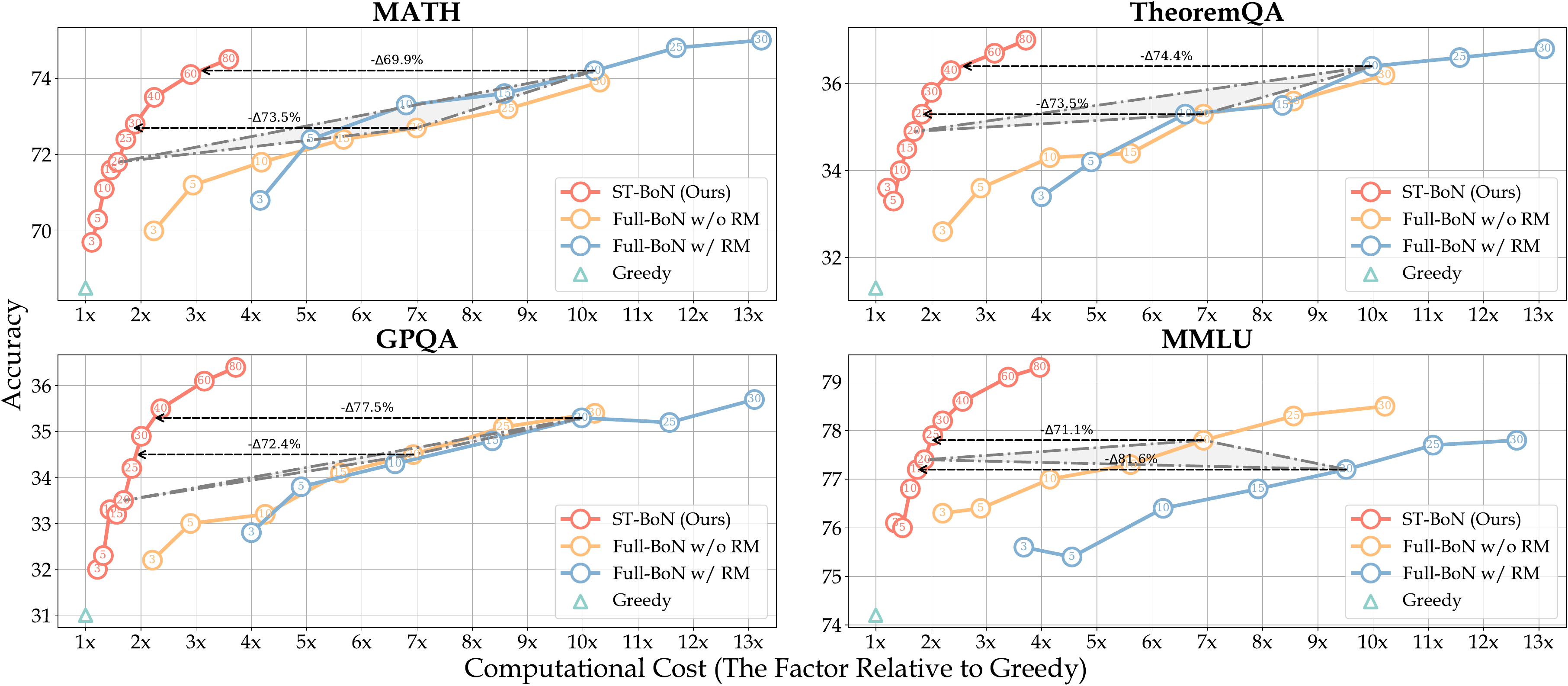}
        \caption{Objective Tasks (mainly on reasoning and knowledge domains).}
        \label{img:objective-qwen}
    \end{subfigure}

    \vspace{0.05in}  

    \begin{subfigure}[b]{1\textwidth}
        \includegraphics[width=\textwidth]{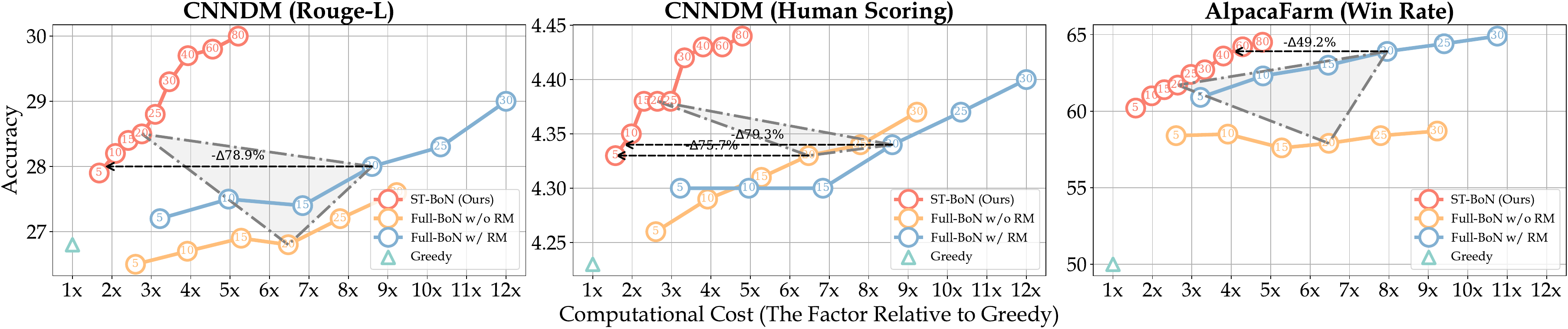}
        \caption{Subjective Tasks (open-ended generation and preference alignment).}
        \label{img:subjective-qwen}
    \end{subfigure}

    \vspace{-0.05in}
    \caption{Computational cost and accuracy on different objective and subjective task datasets with various domains using the {\bf Qwen2.5-7B-Instruct} model. Each point on the curves corresponds to a specific sampling size $N$ from 3 to 80, with values labeled within the circle. We connect the three results at $N = 20$ with gray dashed lines to show their relative performance without considering cost.}
    \label{fig:combined_tasks-qwen}
    \vspace{-0.2in}
\end{figure}

\subsection{Results I: Objective Task}
\label{sec:objective}

\textcolor{deepred}{Figure \colorref{img:objective-qwen}} presents the results of Qwen2.5-7B-Instruct on four objective tasks. Due to space constraints, results for other models are provided in \textcolor{deepred}{Appendix \colorref{sec:parameter}}.
First, we analyze the {\bf \textit{result soundness}}. Under the same sampling size $N$ (we use $N=20$ as an example and connect the three paradigms with gray dashed lines), when cost is ignored, compared to Full-BoN w/o RM, ST-BoN shows no significant performance drop, indicating that {\bf the theory proposed in \textcolor{deepred}{Section \ref{sec:theory}}} --- early consistency foreshadows final consistency --- {\bf is reasonable and holds in practice.}
In addition, Full-BoN w/ RM outperforms the others in three datasets more related to reasoning (except MMLU), which aligns with the test-time scaling goal of ``increasing inference cost to improve performance'' \colorcitep{snell2024scaling}.

Next, we analyze our core advantage: {\bf \textit{cost-performance trade-offs}}.
Due to self-truncation, ST-BoN does not fully utilize all $N$ samples, leading to lower performance than Full-BoN initially. However, it greatly reduces the cost, allowing $N$ to be scaled up within the saved cost.
As $N$ increases, ST-BoN eventually matches Full-BoN's performance while using lower costs.
For example, when Full-BoN uses $N = 20$, ST-BoN reaches the same accuracy at $N = 20\sim30$ with 70–75\% lower costs (vs. Full-BoN w/o RM) , and at $N = 40\sim80$ with 75–80\% lower costs (vs. Full-BoN w/ RM), showing that {\bf ST-BoN can achieve the same performance as Full-BoN with significantly lower costs}.

On the other hand, under the same computational cost, ST-BoN can scale to a much larger $N$ than Full-BoN. For example, when Full-BoN w/o RM uses $N = 10$ and Full-BoN w/ RM uses $N = 3$, ST-BoN can reach nearly $N = 80$. This gives it access to much more samples, leading to a 3-4 point higher accuracy across all datasets.
These results show that {\bf under the same computational cost, ST-BoN can achieve better performance than Full-BoN}.

Finally, we also observe the {\bf \textit{domain generalization}} of ST-BoN.
Full-BoN w/o RM performs noticeably worse on MMLU, a knowledge dataset, compared to the other three reasoning datasets. In contrast, ST-BoN maintains its cost–performance trade-off advantage. This suggests that Full-BoN w/o RM relies on a strong and general RM to be effective, while ST-BoN avoids this potential cost. {\bf By not requiring domain-specific priors, ST-BoN remains effective across diverse domains}.

\vspace{-0.05in}
\subsection{Results II: Subjective Tasks}
\label{sec:subjective}

To evaluate the domain generalization of ST-BoN, we further test it on subjective tasks. The results for Qwen2.5-7B-Instruct are shown in \textcolor{deepred}{Figure \colorref{img:subjective-qwen}}, with results of other models shown in \textcolor{deepred}{Appendix \colorref{sec:parameter}}.
First, Full-BoN w/o RM performs poorly, indicating that semantic consistency cannot identify optimal samples.
Second, Full-BoN w/ RM underperforms on summarization tasks due to a domain mismatch: the reward model is typically trained on open-domain QA preferences, which rarely cover summarization, leading to OOD issues.
Finally, on instruction-following tasks, although Full-BoN w/ RM outperforms ST-BoN at the same sampling size $N$, ST-BoN can reach its performance by increasing $N$, while reducing computational cost by approximately 50\%. {\bf This highlights ST-BoN's cost-performance trade-off on subjective tasks and further demonstrates its domain robustness.}

\section{In-depth Analysis}
\label{sec:analysis}

\subsection{Component Ablation: Why ST-BoN Works?}
\label{sec:feasibility}

\begin{figure}[t]
\vspace{-0.1in}
  \centering
  \includegraphics[width=1\columnwidth]{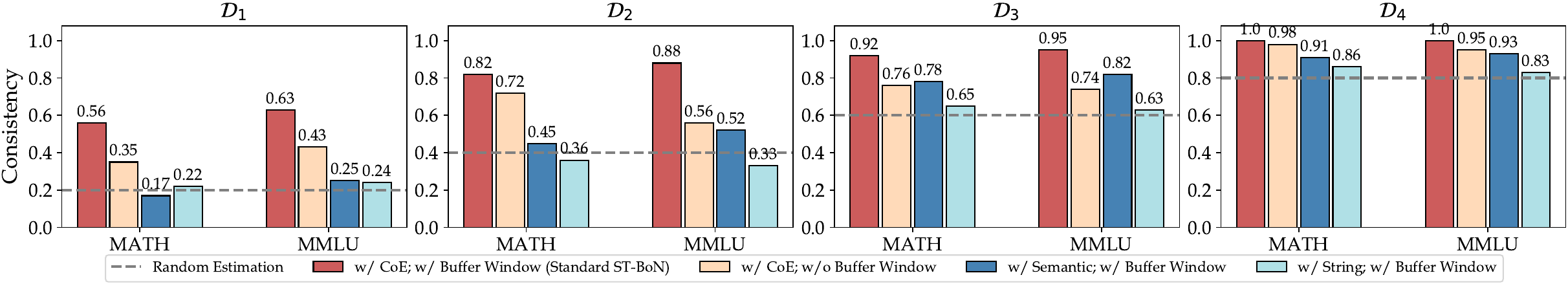}  
  \vspace{-0.2in}
  \caption{The early-final consistencies between early self-estimation and final correctness in different domains and sub-datasets with the {\bf Qwen2.5-7B-Instruct} model. $\mathcal{D}_i$ represents the subset containing all cases from dataset $\mathcal{D}$ that produces $i$ correct answers out of $N$ samplings.}
  \label{img:effectiveness}
\vspace{-0.15in}
\end{figure}

Our method is motivated by the idea that early consistency may foreshadow final consistency, then we use CoE to self-estimate early consistency, and propose a buffer window to mitigate noise.
In this part, we will ablate the two components: CoE measure and buffer window, to evaluate their necessity.

\vspace{-0.12in}
\paragraph{Baseline.}
The standard ST-BoN involves ``w/ CoE \& w/ Buffer Window''.
First, we {\it \textbf{remove the buffer window}}, allowing the LLM to perform only one self-estimation at the earliest estimation time $c$.
Next, to further assess the necessity of using hidden states of LLMs ({\it i.e.}, CoE) for early consistency computation, we compare two unsupervised output measure: {\it \textbf{semantic}} and {\it \textbf{string}}.
Specifically, in \textcolor{deepred}{Eq.}\colorref{eq:aggregation}, we replace $\mathcal{D}(Y^i_{1:c}, Y^j_{1:c})$ with $||[\bm{h}_{1:c}^{i}]_L - [\bm{h}_{1:c}^{j}]_L||_2^2$ and $1 / \textrm{Rouge-L}(Y_{1:c}^i, Y_{1:c}^j)$, respectively.

\vspace{-0.12in}
\paragraph{Evaluation.}
We assess the {\it \textbf{Early-Final Consistency}} between {\it early self-estimation} and {\it final correctness}.
Specifically, assume each case in dataset $\mathcal{D}$ is sampled $N$ times. The dataset is divided into $N+1$ disjoint subsets $\{\mathcal{D}_i\}_{i=0}^N$, where $\mathcal{D}_i$ contains cases that yield $i$ correct answers out of $N$ samples.
For each $\mathcal{D}_i$, we compute the proportion of cases where the LLM's self-estimated best sampling produces a correct final answer, representing early-final consistency. Naturally, as $i$ decreases, the estimation becomes more challenging, and the random consistency expectation is $i/N$.

\vspace{-0.12in}
\paragraph{Results.}
We evaluate on the MATH and MMLU datasets due to their diverse domains and difficulty levels. With $N=5$, we exclude subsets $\mathcal{D}_0$ and $\mathcal{D}_5$, as they represent cases that are entirely correct or incorrect.
\textcolor{deepred}{Figure \colorref{img:effectiveness}} presents the results using the Qwen2.5-7B-Instruct model (Results for other models are in \textcolor{deepred}{Appendix \colorref{sec:supplementary-results}}), we find that:
{\bf (i)} ``w/ buffer window'' consistently outperforms that ``w/o buffer window'', demonstrating its effectiveness.
{\bf (ii)} For all subsets, ``w/ CoE'' significantly outperforms random estimation, with standard ST-BoN achieving excellent consistency. In contrast, ``w/ semantic'' and ``w/ string'' hover around the random baseline, highlighting the critical role of CoE in early consistency measure.
{\bf Thus, each module in the current ST-BoN plays an essential role.}

In fact, we have also discovered that {\bf using CoE consistency under full generation, yields better results than using self-consistency with majority voting}. This provides further evidence that CoE's effectiveness in measuring early consistency is not random.
The results are shown in \textcolor{deepred}{Appendix \colorref{sec:coe-full}}.

\subsection{Window Length $\tau$ Ablation}
\label{sec:ablation}

In ST-BoN, the buffer window length $\tau$ is the only hyperparameter. We study its impact by varying $\tau$ in different datasets with Qwen2.5-7B-Instruct, with a fixed sampling size $N=10$.

\begin{figure}[t]
\vspace{-0.1in}
  \centering
  \includegraphics[width=1\textwidth]{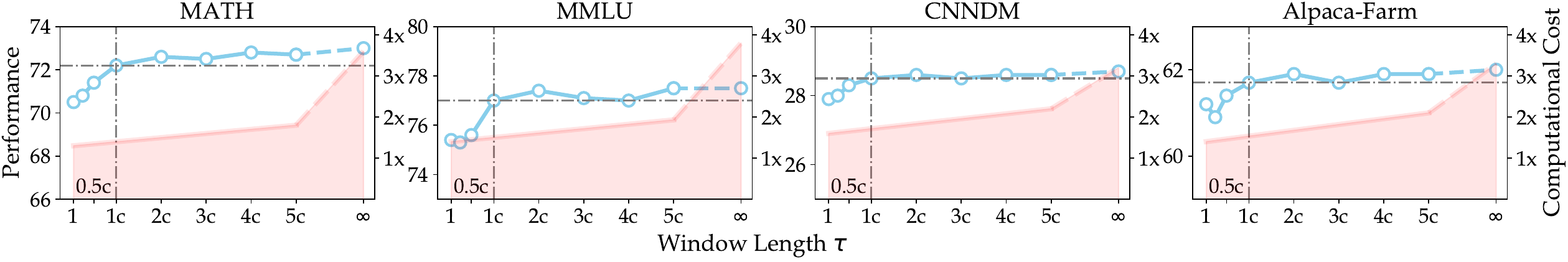}  
  \vspace{-0.22in}
  \caption{Hyperparameter ablation with varying window length $\tau$. The \textcolor{cyan}{blue line} shows performance changes as $\tau$ increases, while the \textcolor{pink}{pink shading} represents changes in computational cost. $\tau \rightarrow \infty$ indicates a window extended until any sampling reaches \texttt{[EOS]}.}
  \label{img:ablation}
\vspace{-0.1in}
\end{figure}

\textcolor{deepred}{Figure \colorref{img:ablation}} shows that when $\tau < c$, the performance gain relative to cost increases faster. However, beyond $c$, this gain slows significantly, making $c$ the optimal choice for balancing performance and cost in our experiments.
Furthermore, if ignoring costs, performance continues to show fluctuating improvements as $\tau$ increases, suggesting the potential of task-specific $\tau$ tuning. For example, on the MATH dataset, performance improves more steadily with larger $\tau$ values, probably due to its longer outputs, where early estimates are less informative, and need a larger buffer window to capture more information.
Adaptive $\tau$ based on task complexity can be a promising direction for future work.

\subsection{Sampling Strategy Robustness}
\label{sec:robustness}

We also evaluate the robustness of ST-BoN across different sampling strategies by varying top-$k$, top-$p$, and temperature $T$ settings, and analyze their impact on performance and computational cost.

\begin{figure}[t]
  \centering
  \includegraphics[width=1\textwidth]{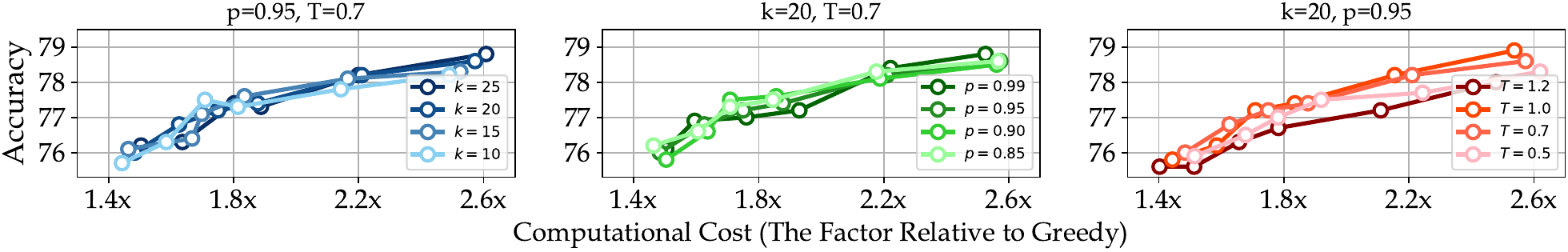} 
  \vspace{-0.25in}
  \caption{Sampling strategy ablation with various top-$k$, top-$p$, and temperature $T$ in MMLU.}
  \label{img:robustness}
\vspace{-0.2in}
\end{figure}

\textcolor{deepred}{Figure \colorref{img:robustness}} shows the results on the MMLU dataset using the Qwen2.5-7B-Instruct model. We observe that variations in $k$ and $p$ have a minimal impact on performance and cost, while changes in $T$ affect both more significantly. Unlike $k$ and $p$, which only shift the truncation point of the probability distribution without affecting token relativity, $T$ alters token probabilities, making the distribution more uniform and increasing sampling randomness as $T$ increases. This results in an earlier occurrence of the earliest estimation time $c$, thereby reducing computational cost. However, an earlier estimation time $c$ might capture less effective information, potentially as a side effect of improved efficiency.
We recommend using a moderate $T$ value, such as 0.7, to strike a balance between performance and cost.

\subsection{Case Study}
We also conduct case studies for further comparison. We find that ST-BoN can better address ambiguous scenarios, such as when sampling answers are all inconsistent or majority voting encounters high randomness, compensating for these limitations. Details are available in \textcolor{deepred}{Appendix \colorref{sec:case-study}}.

\section{Related Work and Comparisons}

\subsection{Test-Time Scaling}

BoN aims to achieve test-time scaling \colorcitep{snell2024scaling}. The idea is to increase the compute during inference while setting cost aside, to push the performance ceiling. Existing approaches fall into two broad lines \colorcitep{muennighoff2025s1}: {\it Sequential scaling} lengthens a single reasoning path during generation and seeks deeper deliberation, reflection, or insight \colorcitep{guo2025deepseek}. {\it Parallel scaling}, which captures the spirit of BoN, increases the number of reasoning paths generated at once and seeks greater diversity in reasoning \colorcitep{wang2022self,brown2024large}.
Other methods, such as ToT \colorcitep{yao2024tree}, GoT \colorcitep{besta2024graph}, and Monte Carlo Tree Search \colorcitep{wan2024alphazero}, share the same objective but expand from a reasoning-structure perspective, taking different research tracks than BoN.

\subsection{Best-of-$N$ Sampling}

BoN sampling was originally proposed for inference-time preference alignment \colorcitep{nakano2021webgpt}, using a reward model to select the best sample and further fine-tune the base model \colorcitep{touvron2023llama}. Its effectiveness has been theoretically supported by several studies \colorcitep{eisenstein2023helping,mudgal2023controlled,beirami2024theoretical,yang2024asymptotics}.
BoN has also seen broad adoption in reasoning tasks \colorcitep{snell2024scaling}, notably in the self-consistency paradigm \colorcitep{wang2022self}, which uses majority voting to select answers unsupervisedly.
More recently, process reward models \colorcitep{lightman2023let,wang2024math} have enabled direct scoring within BoN for reasoning. However, reward models raise computational costs and are vulnerable to overoptimization, and rely heavily on their model capabilities \colorcitep{gao2023scaling}.

\subsection{Efficient Best-of-$N$}

The core optimization idea of ST-BoN can be summarized as {\it \textbf{``depth pruning''}}, which eliminates some suboptimal samples in early decoding to reduce memory usage. SBoN \colorcitep{sun2024fast} also considers removing local segments; CARDS \colorcitep{li2024cascade} and TreeBoN \colorcitep{qiu2024treebon} continue to branch after discarding them. However, these methods still rely on a reward model for scoring, so the additional inference latency and memory usage remain high.
More importantly, they have been validated only under the preference alignment, but in reasoning domains, reward models may be difficult to score randomly segmented process fragments.
{\bf \textcolor{deepred}{Appendix} \colorref{sec:related-work-comp} offers a detailed comparison, highlighting the greater efficiency of ST-BoN and its broader applicability in different domains.}

Another type of optimization idea can be summarized as {\it \textbf{``breadth pruning''}}, aiming to adaptively reduce the actual sampling number under a fixed budget $N$. For example, ASC \colorcitep{aggarwal2023let} stops sampling early based on answer frequency, and ESC \colorcitep{li2024escape} dynamically adjusts the required sampling sizes from an entropy perspective. 
Although these methods lower token output and reduce memory usage, they rely on serial sampling with adaptive stopping, which compromises the parallelism benefits of BoN and limits efficiency on modern GPUs. As a result, they are suboptimal in terms of inference latency.

Certainly, another research line studies distilling the BoN policy during training to replace multiple samplings with a single one, reducing inference-time latency. Methods like BoNBoN \colorcitep{gui2024bonbon}, BOND \colorcitep{sessa2024bond}, and vBoN \colorcitep{amini2024variational} focus on this line, shifting the computational cost from the inference phase to the training phase.
Overall, this line explores more about the potential of learning the BoN distribution at training time, making orthogonal contributions compared to direct inference-time optimization.

\section{Conclusion}

We propose ST-BoN decoding, which enables LLMs to self-estimate the most promising sampling without fully generating $N$ samples or using reward models. ST-BoN significantly reduces GPU memory overhead and inference latency while demonstrating better cost-performance trade-offs than Full-BoN in both objective and subjective tasks from reasoning to preference alignment.

\newpage
\section*{Limitations}
\label{sec:limitations}

\begin{itemize}[leftmargin=1cm]
    \item {\bf (1) Model Transparency:} Since accessing the hidden states of the LLM is required, ST-BoN does not apply to closed-source models such as OpenAI's GPT-4 series \colorcitep{achiam2023gpt}. However, with the rapid development of open-source LLMs \colorcitep{guo2025deepseek}, we believe that research on white-box methods is crucial, as it can provide stronger interpretability and help us better explore the internal mechanisms of LLMs.
    \item {\bf (2) Length Adaptivity:} As mentioned in \textcolor{deepred}{Section \colorref{sec:ablation}}, there is still room for improvement in ST-BoN’s ability to adaptively adjust the window length $\tau$ based on task complexity. Longer responses may require longer windows to capture more information. To better balance the cost, we can observe the average generation length of the task during greedy decoding and adjust the window size accordingly. This ensures that performance is maximized within an acceptable cost range.
\end{itemize}

\section*{Societal Impact}
\begin{itemize}[leftmargin=1cm]
    \item {\bf (1) Trustworthiness:} ST-BoN relies on internal consistency signals to guide decoding, moving the selection process into the model's latent space. Prior work \colorcitep{wang2022self} has shown a strong correlation between consistency and correctness. Building on this, our theoretical analysis (\textcolor{deepred}{Section \colorref{sec:theory}}) and empirical findings (\textcolor{deepred}{Section \colorref{sec:feasibility}}) further support the reliability of latent-space decision-making. Together, these two parts provide a solid foundation for the model's trustworthiness under our method.
    \item {\bf (2) Safety:} ST-BoN introduces no additional safety risks. The sampling follows the model’s native distribution without adversarial perturbation, and the consistency-based selection mechanism operates without external reward models, thus avoiding interference from out-of-distribution signals.
    \item {\bf (3) Biases:} ST-BoN inherits the base model’s biases, as it does not perform explicit debiasing. However, by leveraging multiple parallel samplings, it can identify and truncate transient biases that arise sporadically during generation. These biased trajectories may be excluded as outliers through the consistency-based strategy, thereby potentially mitigating their impact.
\end{itemize}

\section*{Acknowledgement}
This paper was supported by the General Program of the National Natural Science Foundation of China (62176153).
This paper was also supported by the National Natural Science Foundation of China (62406188) and the Natural Science Foundation of Shanghai (24ZR1440300).
Additionally, this work was partially conducted during Yiming's internship at the Tongyi Lab, Alibaba Group. It was supported by the Alibaba Research Intern Program.

\bibliographystyle{plain}
\bibliography{example_paper}


\newpage
\appendix

\part*{Appendix}
\section{Proof of \textcolor{deepred}{Theorem \colorref{theorem:main}}}
\label{sec:proof}

\begin{proof}
We first make the two key structural assumptions:
\begin{itemize}[leftmargin=0.5cm]
    \item Local Lipschitz continuity of the model: The model’s next‐token distributions satisfy, for all partial sequence pairs \( (Y_{1:t}^i,Y_{1:t}^j) \),
    \begin{equation}
        \bigl\|P_{t+1}(\cdot\mid Y_{1:t}^i)\;-\;P_{t+1}(\cdot\mid Y_{1:t}^j)\bigr\|_{\mathrm{TV}}
        \;\le\;
        L \cdot \mathcal{D}(Y_{1:t}^i,Y_{1:t}^j),
    \end{equation}
    for some \(L\ge0\), where $||\cdot||_{\mathrm{TV}}$ denotes the Total Variation Distance. It quantifies the ``smooth'' behavior that tiny changes to the sequence produce only tiny shifts in the next-token probabilities. Transformers are inherently Lipschitz-continuous: Each token embedding passes through layers of self-attention and feed-forward networks whose composition preserves this smoothness, so similar sequences yield similar hidden states. Moreover, Softmax and LayerNorm further suppress perturbations, preventing small input changes from causing large changes in the output.
    \item Bounded‐increment of the distance: The distance measure \(\mathcal{D}\) satisfies: for any two partial sequences \(Y_{1:t}^i,Y_{1:t}^j\) and any next tokens \(Y_{t+1}^i,Y_{t+1}^j\),
    \begin{equation}
        \mathcal{D}(Y_{1:t}^i \circ Y_{t+1}^i,\;Y_{1:t}^j \circ Y_{t+1}^j)
        \;\le\;
        \mathcal{D}(Y_{1:t}^i,Y_{1:t}^j)
        \;+\;
        M \cdot \mathbf1[Y_{t+1}^i\neq Y_{t+1}^j],
    \end{equation}
    for some constant \(M>0\). It ensures the distance measure remains controlled, preventing abrupt shifts from a single token change. Fundamentally, this reflects the smoothness characteristic of local Lipschitz continuity.

\end{itemize}

Next, we start our proof. Consider the pair \((Y_{1:t}^1,Y_{1:t}^i)\). By the maximal coupling of
\(P_{t+1}(\cdot\mid Y^1_{1:t})\) and \(P_{t+1}(\cdot\mid Y^i_{1:t})\),
The probability they draw different tokens at step \(t+1\) is
\begin{equation}
    \Pr\bigl[Y^1_{t+1}\neq Y^i_{t+1}\mid d^i_t\bigr]
    =\bigl\|P_{t+1}(\cdot\mid Y_{1:t}^1)-P_{t+1}(\cdot\mid Y_{1:t}^i)\bigr\|_{\mathrm{TV}}
    \;\le\;L\,d^i_t.
\end{equation}

If they differ, the bounded‐increment assumption implies
\begin{equation}
  d^i_{t+1} - d^i_t \;\le\; M.
\end{equation}

Hence
\begin{equation}
    \begin{aligned}
      \mathbb{E}[d^i_{t+1}\mid d^i_t]
      &= \mathbb{E}[d^i_{t+1} - d^i_t + d^i_t\mid d^i_t] \\
      &\le d^i_t
        + \Pr[Y^1_{t+1}\neq Y^i_{t+1}\mid d^i_t]\;M
      \\
      &\le d^i_t + (L\,d^i_t)\,M
      = (1 + LM)\,d^i_t.
    \end{aligned}
\end{equation}

Let $\Gamma = 1 + LM$. Summing the above over \(i=2,\dots,N\) gives
\begin{equation}
  \mathbb{E}[S_{t+1}\mid S_t]
  = \sum_{i=2}^N \mathbb{E}[d^i_{t+1}\mid d^i_t]
  \;\le\;
  \Gamma \sum_{i=2}^N d^i_t
  = \Gamma\,S_t.
\end{equation}

Starting from \(t\) and iterating the one‐step bound,
\begin{equation}
\begin{aligned}
\mathbb{E}[S_{t+1}]&\le \Gamma\,S_t,\\
\mathbb{E}[S_{t+2}]&\le \Gamma\,\mathbb{E}[S_{t+1}]\le \Gamma^2 S_t,\\
&\;\;\vdots\\
\mathbb{E}[S_T]&\le \Gamma^{\,T-t}S_t.
\end{aligned}
\end{equation}

For any \(\epsilon>0\), Markov’s inequality yields
\begin{equation}
  \Pr[S_T \ge \epsilon\mid S_t]
  \;\le\;
  \frac{\mathbb{E}[S_T\mid S_t]}{\epsilon}
  \;\le\;
  \frac{\Gamma^{\,T-t}\,S_t}{\epsilon}.
\end{equation}

Rearranging gives the stated result in \textcolor{deepred}{Theorem \colorref{theorem:main}}:
\begin{equation}
  \Pr[S_T \le \epsilon\mid S_t]
  \;\ge\;
  1 - \frac{\Gamma^{\,T-t}\,}{\epsilon}S_t.
\end{equation}

\end{proof}

\section{Related Work Comparison}
\label{sec:related-work-comp}

Some existing work also considers the reduction of the full generation to improve the efficiency of BoN.
The core idea behind all three methods is to use the reward model to score partial sequences at certain time steps during the sampling process.
SBoN sets a threshold and discards all samples with scores above this threshold, then iteratively generates and selects the remaining candidates until only one remains.
TreeBoN and CARDS retain only one candidate at each selection and then repeatedly branch out $N$ new paths, performing selection at each branching stage.
These methods have fundamental differences from ST-BoN. \textcolor{deepred}{Table \colorref{tab:related-work}} presents a comparison between different dimensions.

\begin{table}[htbp]
\vspace{-0.1in}
\caption{Detailed comparison with other efficient BoN methods aiming to reducing full generation.}
\vspace{0.06in}
\centering
\footnotesize
\renewcommand\arraystretch{1.1}
\setlength{\tabcolsep}{2.5mm}{
\resizebox{1\textwidth}{!}{
\begin{tabular}{lcccccc}

\toprule

{\bf Method} & \makecell{{\bf Free of} \\{\bf Domain Prior}} & \makecell{{\bf Free of} \\{\bf Reward Models}} & \makecell{{\bf Memory Bottleneck} \\{\bf (excluding KV cache)}} & {\bf Time Bottleneck} & {\bf Applicable Domain} \\

\midrule

SBoN \colorcitep{sun2024fast} & \ding{55} & \ding{55} & \multirow{3}{*}{Reward Model Weight} & \multirow{3}{*}{Load \& Run Reward Models} & \multirow{3}{*}{Preference Alignment} \\
CARDS \colorcitep{li2024cascade} & \ding{55} & \ding{55} &  &  &  \\
TreeBoN \colorcitep{qiu2024treebon} & \ding{51} & \ding{55} &  &  &  \\
\rowcolor{gray!15}
ST-BoN (Ours) & \ding{51} & \ding{51} & N/A & Vector Operations & \makecell{All Objective Tasks\\ (e.g. Reasoning \& Knowledge) \\and Subjective Tasks \\(e.g. Generation \& Preference Alignment)} \\

\bottomrule

\end{tabular}%
}
}

\label{tab:related-work}%

\vspace{-0in}
\end{table}

From the perspective of computational cost, the memory overhead and inference time introduced by the reward model are non-negligible. Additionally, during the autoregressive generation process, the reward model repeatedly scores incomplete segments, which blocks parallel sampling and significantly reduces actual GPU utilization. Therefore, in practical production settings, from a wall-clock-time perspective, these approaches still face efficiency bottlenecks.

On the other hand, the effectiveness of SBoN and CARDS relies on a prior assumption: the reward model’s score for a sampling prefix correlates with the score for the final complete text. However, this assumption has only been validated in preference alignment domains, raising concerns about domain generalizability. When applied to reasoning domains, process reward models will likely struggle to evaluate arbitrarily segmented text, as they tend to focus on scoring complete process segments.
Furthermore, \colorcitep{qiu2024treebon} has pointed out that the prior assumption underlying CARDS breaks down when the generation length exceeds 128 tokens.
{\bf In contrast, ST-BoN is highly lightweight: it does not incur overhead or blocking from a reward model and does not depend on domain-specific priors.}

\begin{figure*}[h]
\vspace{-0.0in}
  \centering
  \includegraphics[width=1\textwidth]{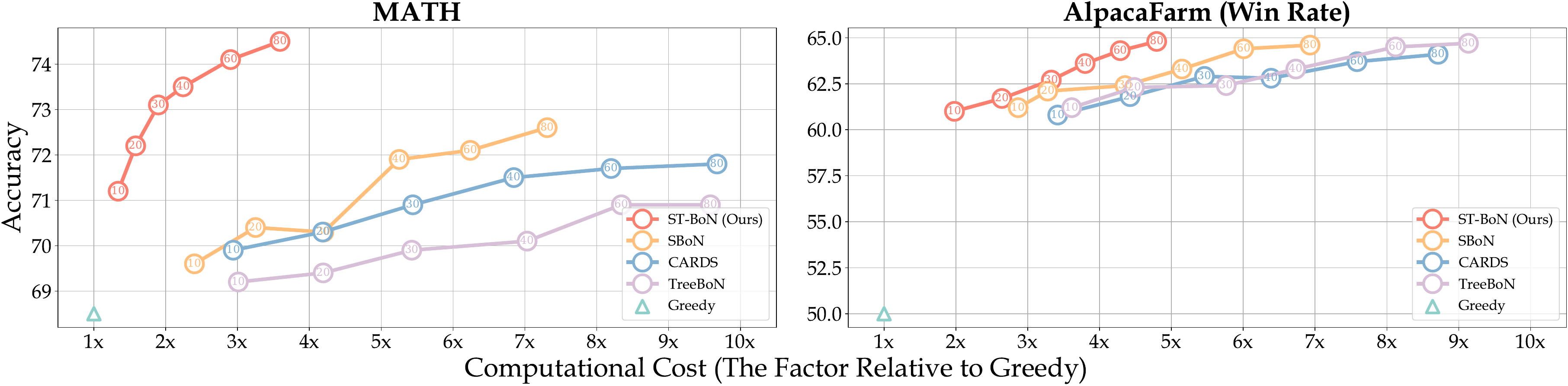}  
  \vspace{-0.21in}
  \caption{Computational cost and accuracy on different objective and subjective task datasets with various domains using the {\bf Qwen2.5-7B-Instruct} model. Each point on the curves corresponds to a specific sampling size $N$ from 10 to 80, with values labeled within the circle. {\bf We compare our ST-BoN with three other efficient BoN methods.}}
  \label{img:related-work}
\vspace{-0.in}
\end{figure*}

We compare the cost-performance trade-offs of ST-BoN and the three methods on both an objective and a subjective task, as illustrated in \textcolor{deepred}{Figure \colorref{img:related-work}}.
(i) The MATH benchmark evaluates the reasoning ability. All three other methods perform poorly on this task, probably because process reward models require complete procedural segments to assign reliable scores. They struggle to make sound decisions when evaluating arbitrarily segmented text fragments.
(ii) The AlpacaFarm benchmark assesses preference alignment, the primary domain where the three methods have demonstrated effectiveness. In this domain, ST-BoN performs comparably or even better than the others. This may be because the inductive biases assumed by some methods do not generalize well across different models.
These results collectively demonstrate the strong domain generalizability and practical applicability of ST-BoN.

\newpage
\section{Evaluation Setup}
\label{sec:evaluation}

\begin{itemize}
    \item {\bf Reasoning Scenarios}: We use {\it Accuracy} as the evaluation metric. Following \colorcitep{wang2022self}, we extract answers from LLM responses using the exact match method and compare them with the ground truth.
    \item {\bf Open-end Scenarios}:
    \begin{itemize}
        \item {\bf Summarization}: We use both automatic and human evaluation. For the automatic metric, we adopt {\it Rouge-L} \colorcitep{lin2004rouge}. For human evaluation, volunteers score LLM-generated summaries across four dimensions \colorcitep{kryscinski2019neural, wang2023element}: (i) Fluency: Free of spelling, grammatical, or syntactic errors; (ii) Coherence: Events should be logically connected, with smooth linguistic transitions; (iii) Consistency: No hallucinated facts; all information must align with the source document; (iv) Relevance: Emphasizes key information while minimizing non-core facts and redundant details.
        Each dimension is rated on a scale of 0 to 5, and the final score is the average of the four dimension scores.
        We invited three graduate students to individually score 50 samples from the dataset following \colorcitep{wang2023element}. The final reported score is the average of the three annotations. Each annotator spent approximately 16 hours on average to complete the task.
        \item {\bf Instruction-following}: We use the BoN alignment metric, {\it Win Rate (WR)} \colorcitep{zheng2023judging}. Each paradigm's responses were paired with those generated via greedy decoding and assessed by an external LLM, GPT-4-Turbo \colorcitep{achiam2023gpt}. The judge model selects the better response in each pair, and WR is computed as the percentage of comparisons where the paradigm's response is preferred. Greedy decoding serves as a baseline with a WR of 50\%, representing random performance.
    \end{itemize}
\end{itemize}

\section{Supplementary Experimental Results}

\subsection{Main Results of All LLMs}
\label{sec:parameter}

In this part, we present the main experimental results for all remaining LLMs, including: Qwen2.5-72B-Instruct (\textcolor{deepred}{Figure \colorref{fig:combined_tasks-qwen72B}}), Llama3-8B-Instruct (\textcolor{deepred}{Figure \colorref{fig:combined_tasks-llama}}), and Mistral-7B-v0.3-Instruct (\textcolor{deepred}{Figure \colorref{fig:combined_tasks-mistral}}).
As highlighted in \colorcitep{wang2024math}, a significant performance drop occurs when the verifier's parameter scale is smaller than that of the generator. Therefore, for the Qwen2.5-72B-Instruct model, we adopt a PRM of equal scale --- Qwen2.5-Math-PRM-72B \colorcitep{zhang2025lessons}, and only report results in objective tasks.

Under these models, all conclusions regarding the cost–performance trade-off drawn in \textcolor{deepred}{Sections \colorref{sec:objective} and \colorref{sec:subjective}} still hold, and as model size increases, ST-BoN remains effective, demonstrating its robustness to parameter scaling.


\begin{figure}[htbp]
    \centering
    \captionsetup[subfigure]{aboveskip=1pt, belowskip=1pt} 

    \begin{subfigure}[b]{1\textwidth}
        \includegraphics[width=\textwidth]{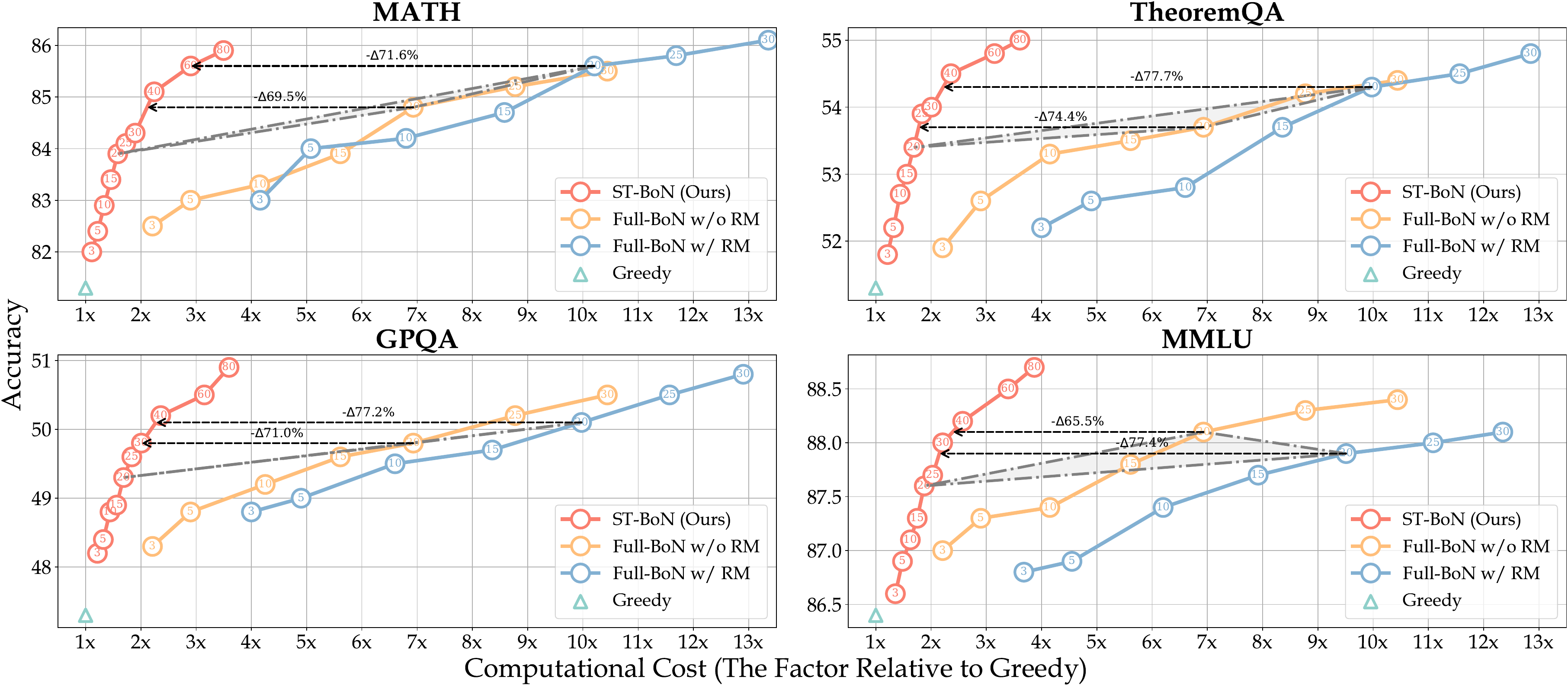}
        \label{img:objective}
    \end{subfigure}



    \vspace{-0.07in}
    \caption{Computational cost and accuracy on different objective task datasets with various domains using the {\bf Qwen2.5-72B-Instruct} model. Each point on the curves corresponds to a specific sampling size $N$ from 3 to 80, with values labeled within the circle. We connect the three results at $N = 20$ with gray dashed lines to show their relative performance without considering cost.}
    \label{fig:combined_tasks-qwen72B}
    \vspace{-0.in}
\end{figure}

\begin{figure}[htbp]
\vspace{-0.2in}
    \centering
    \captionsetup[subfigure]{aboveskip=1pt, belowskip=1pt} 

    \begin{subfigure}[b]{1\textwidth}
        \includegraphics[width=\textwidth]{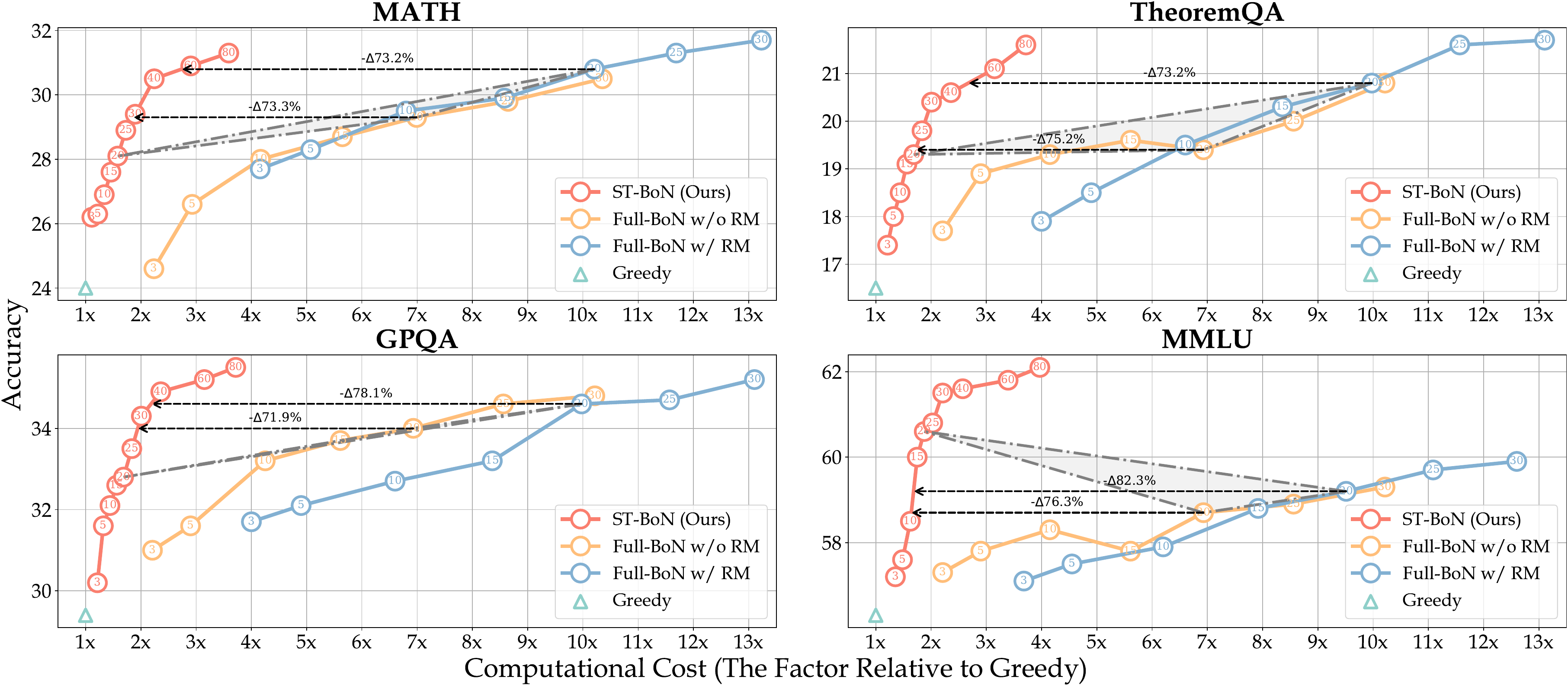}
        \caption{Objective Tasks (mainly on reasoning and knowledge domains).}
    \end{subfigure}

    \vspace{0.12in}  

    \begin{subfigure}[b]{1\textwidth}
        \includegraphics[width=\textwidth]{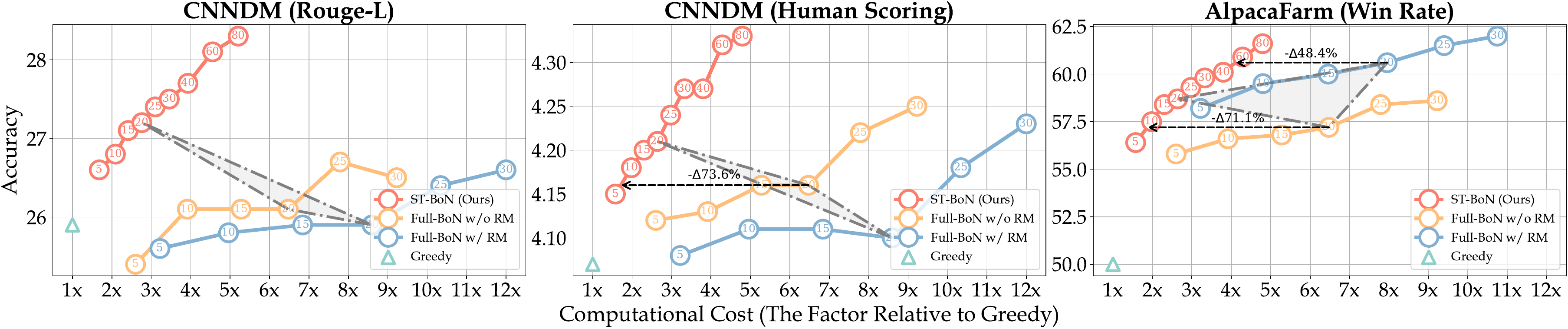}
        \caption{Subjective Tasks (open-ended generation and preference alignment).}
    \end{subfigure}

    \vspace{-0.07in}
    \caption{Computational cost and accuracy on different objective and subjective task datasets with various domains using the {\bf Llama-3-8B-Instruct} model. Each point on the curves corresponds to a specific sampling size $N$ from 3 to 80, with values labeled within the circle. We connect the three results at $N = 20$ with gray dashed lines to show their relative performance without considering cost.}
    \label{fig:combined_tasks-llama}
    \vspace{-0.in}
\end{figure}

\begin{figure}[htbp]
    \centering
    \captionsetup[subfigure]{aboveskip=1pt, belowskip=1pt} 

    \begin{subfigure}[b]{1\textwidth}
        \includegraphics[width=\textwidth]{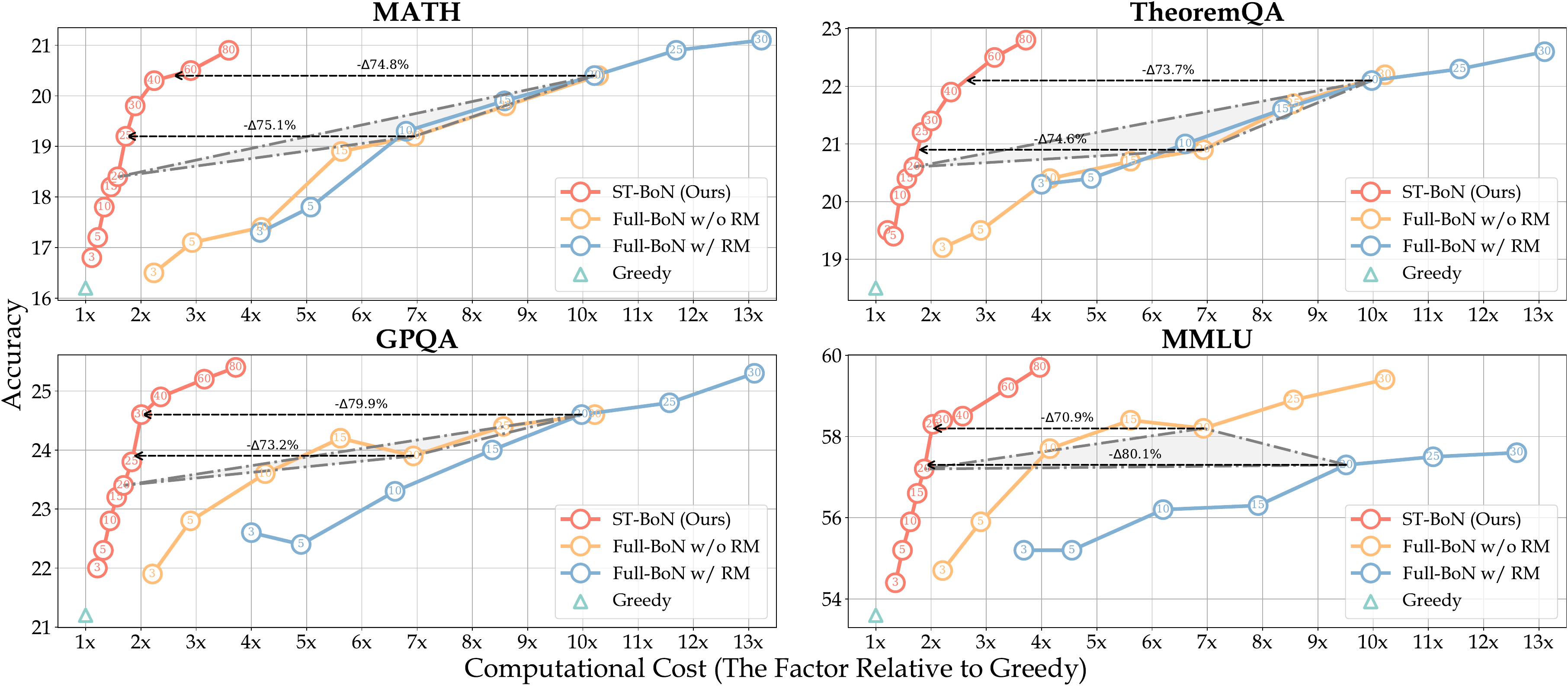}
        \caption{Objective Tasks (mainly on reasoning and knowledge domains).}
    \end{subfigure}

    \vspace{0.12in}  

    \begin{subfigure}[b]{1\textwidth}
        \includegraphics[width=\textwidth]{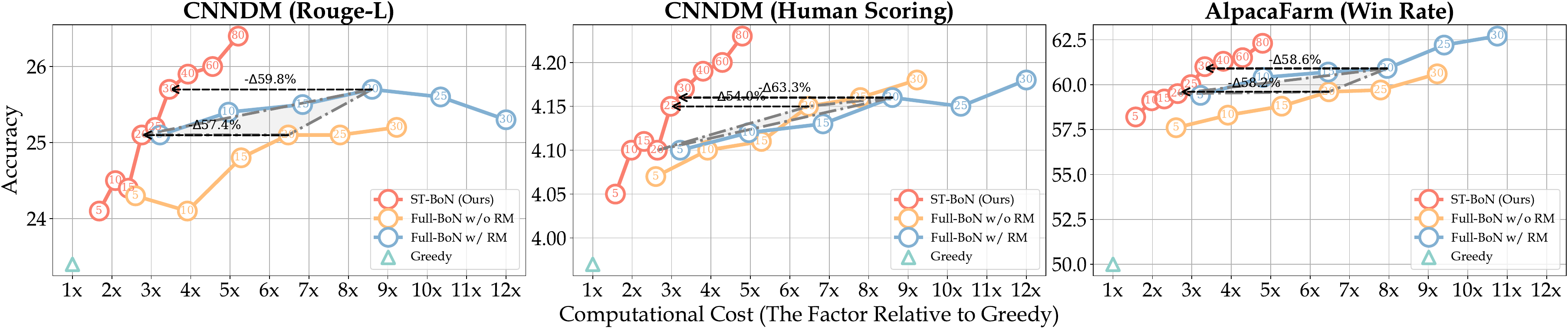}
        \caption{Subjective Tasks (open-ended generation and preference alignment).}
    \end{subfigure}

    \vspace{-0.07in}
    \caption{Computational cost and accuracy on different objective and subjective task datasets with various domains using the {\bf Mistral-7B-Instruct-v0.3} model. Each point on the curves corresponds to a specific sampling size $N$ from 3 to 80, with values labeled within the circle. We connect the three results at $N = 20$ with gray dashed lines to show their relative performance without considering cost.}
    \label{fig:combined_tasks-mistral}
    \vspace{-0.in}
\end{figure}

\clearpage
\subsection{Reward Model Ablation}
\label{sec:more-reward-model}

To ensure that the choice of reward models does not impact our key conclusions, we extend the baselines of the Full-BoN w/ RM paradigm beyond the main experiment in \textcolor{deepred}{Figure \colorref{img:main-more-reward}}.
Specifically, we introduce two additional advanced reward models for comparisons: Qwen2.5-Math-PRM-7B \colorcitep{zhang2025lessons} and RLHFlow-PRM-Mistral-8B \colorcitep{xiong2024implementation}.
As shown in \textcolor{deepred}{Figure \colorref{img:main-more-reward}}, {\bf the choice of reward models does not affect the conclusions regarding the cost-performance trade-off advantages of ST-BoN in the main experiments.}

\begin{figure*}[htbp]
\vspace{-0.1in}
  \centering
  \includegraphics[width=1\textwidth]{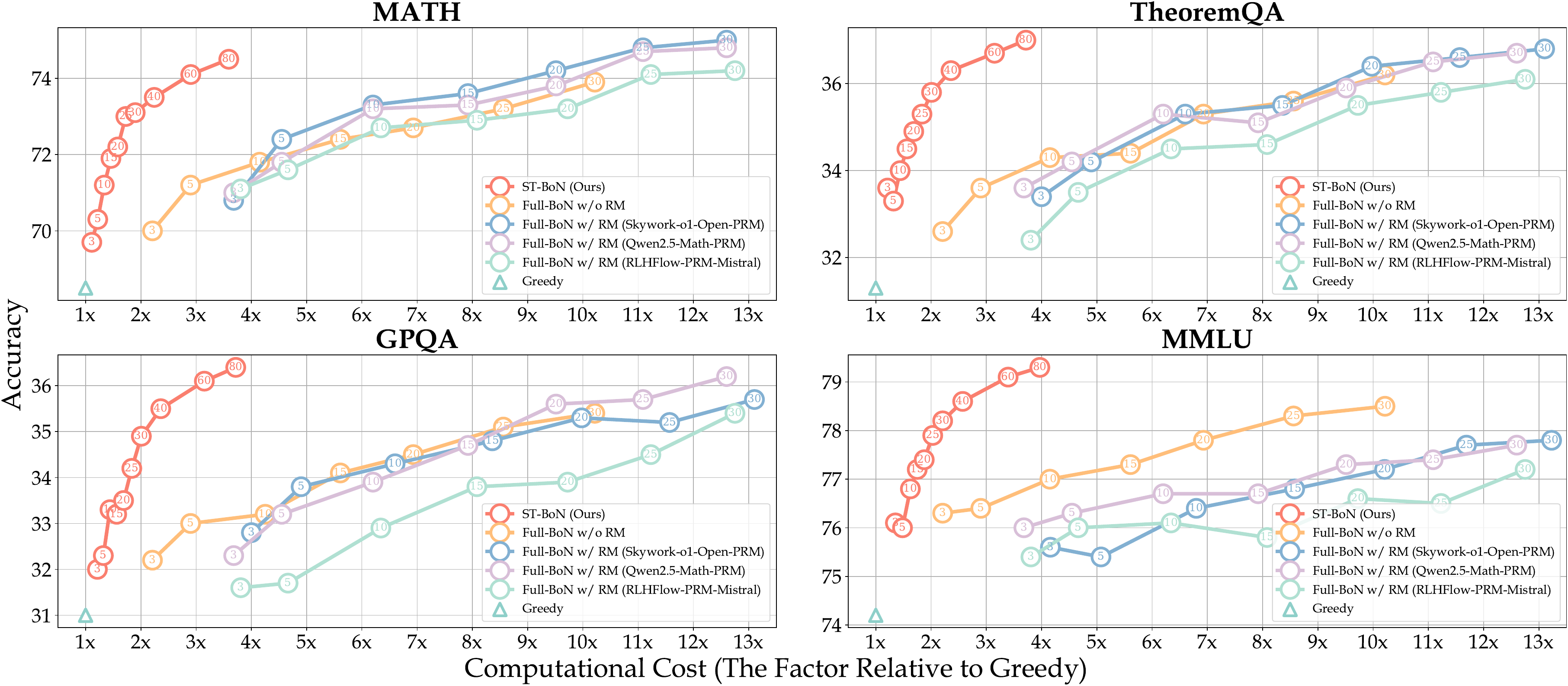}  
  \vspace{-0.15in}
  \caption{{\bf (Reward Model Ablation)} Computational cost and accuracy on different objective and subjective task datasets with various domains using the {\bf Qwen2.5-7B-Instruct} model. Each point on the curves corresponds to a specific sampling size $N$ from 3 to 80, with values labeled within the circle. {\bf For the Full-BoN w/ RM paradigm, we use three reward models for comparisons.}}
  \label{img:main-more-reward}
\vspace{-0.1in}
\end{figure*}

\subsection{Component Ablation of All LLMs}
\label{sec:supplementary-results}

We present the additional early-final consistency results on Llama3-8B-Instruct (\textcolor{deepred}{Figure \colorref{img:effectiveness_llama}}) and Mistral-7B-Instruct-v0.3 (\textcolor{deepred}{Figure \colorref{img:effectiveness_mistral}}) models, demonstrating a strong positive correlation between ST-BoN's early self-estimation and the final answer correctness, as concluded in \textcolor{deepred}{Section \colorref{sec:feasibility}}. This indicates that our consistency conclusion can be generalized across various series of LLMs.

\begin{figure*}[htbp]
\vspace{-0.1in}
  \centering
  \includegraphics[width=1\columnwidth]{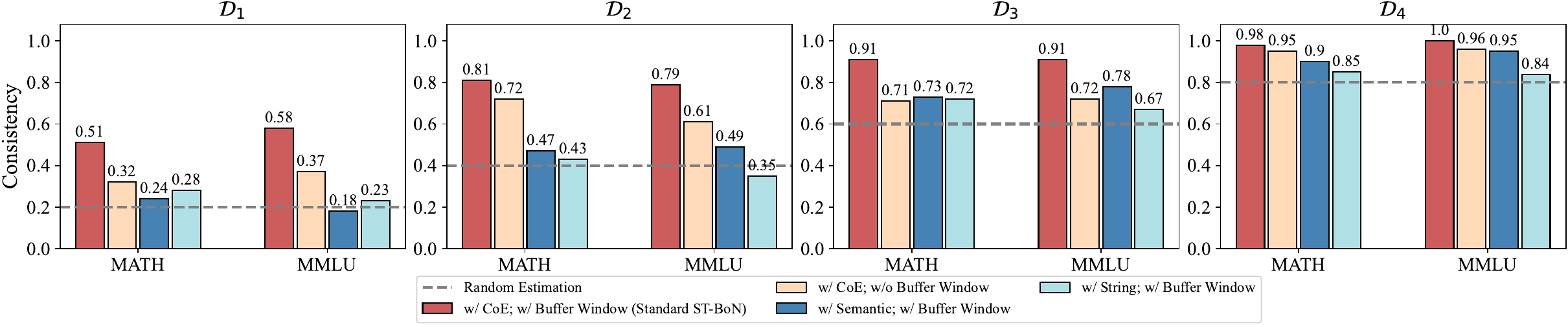}  
  \vspace{-0.2in}
  \caption{The early-final consistencies between early self-estimation and final correctness in different domains and sub-datasets with the {\bf Llama3-8B-Instruct} model. $\mathcal{D}_i$ represents the subset containing all cases from dataset $\mathcal{D}$ that produces $i$ correct answers out of $N$ samplings.}
  \label{img:effectiveness_llama}
\vspace{-0in}
\end{figure*}

\begin{figure*}[htbp]
\vspace{-0.1in}
  \centering
  \includegraphics[width=1\columnwidth]{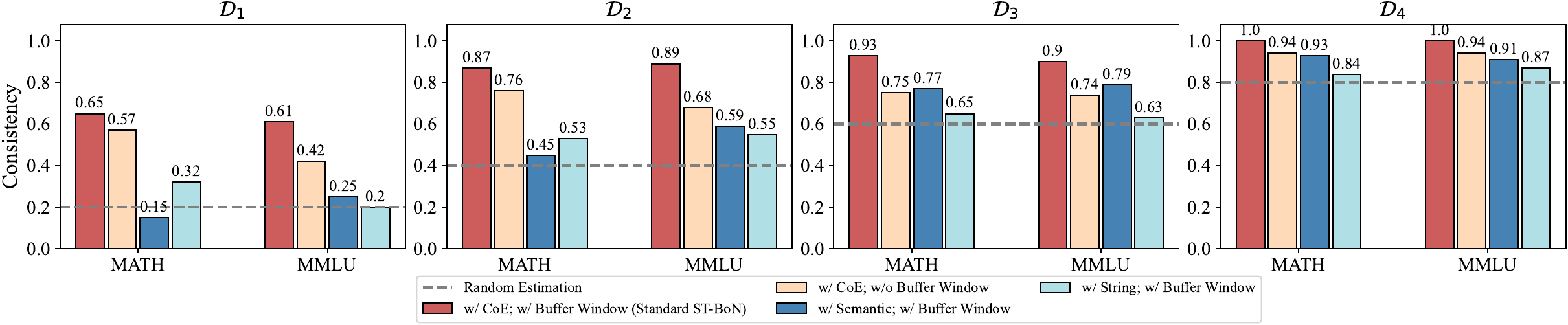}  
  \vspace{-0.2in}
  \caption{The early-final consistencies between early self-estimation and final correctness in different domains and sub-datasets with the {\bf Mistral-7B-Instruct-v0.3} model. $\mathcal{D}_i$ represents the subset containing all cases from dataset $\mathcal{D}$ that produces $i$ correct answers out of $N$ samplings.}
  \label{img:effectiveness_mistral}
\vspace{-0in}
\end{figure*}

\subsection{Full-BoN with Chain-of-Embedding}
\label{sec:coe-full}

In our experiments, we find that under the same sampling size $N$, although the absolute performance of ST-BoN was weaker than that of Full-BoN w/o RM, the gap is not substantial.
Since our self-estimation of early consistency relies on the CoE measure, this raises a further question regarding the effectiveness of CoE: {\it Would using CoE in Full-BoN w/o RM outperform majority voting?}

\begin{table}[htbp]
\vspace{-0.1in}
\caption{The performance comparisons of majority voting and chain-of-embedding under the Full-BoN w/o RM paradigm. We use the {\bf Qwen2.5-7B-Instruct} model to evaluate two objective tasks.}
\vspace{0.06in}
\centering
\footnotesize
\renewcommand\arraystretch{1.1}
\setlength{\tabcolsep}{2.5mm}{
\resizebox{1\textwidth}{!}{
\begin{tabular}{l|cccccccccc}

\toprule

& $N=3$ & $N=5$ & $N=10$ & $N=15$ & $N=20$ & $N=25$ & $N=30$ & $N=40$ & $N=60$ & $N=80$ \\
\midrule
\multicolumn{11}{c}{\bf MATH} \\
\midrule
Majority Voting \colorcitep{wang2022self} & {\bf 70.0} & {\bf 71.2} & 71.8 & 72.4 & 72.7 & 73.2 & {\bf 73.9} & 74.2 & 74.4 & 74.5 \\
Chain-of-Embedding \colorcitep{wang2024latent} & 69.6 & 70.8 & {\bf 72.3} & {\bf 72.7} & {\bf 73.1} & {\bf 73.5} & 73.8 & {\bf 74.5} & {\bf 74.7} & {\bf 74.9} \\

\midrule
\multicolumn{11}{c}{\bf MMLU} \\
\midrule
Majority Voting \colorcitep{wang2022self} & {\bf 76.3} & 76.4 & 77.0 & 77.3 & 77.8 & 78.3 & {\bf 78.5} & 78.6 & 78.6 & 79.2 \\
Chain-of-Embedding \colorcitep{wang2024latent} & 76.2 & {\bf 76.5} & {\bf 77.5} & {\bf 77.8} & {\bf 78.2} & {\bf 78.5} & 78.4 & {\bf 78.9} & {\bf 79.2} & {\bf 79.5} \\

\bottomrule
\end{tabular}%
}
}

\label{tab:full-coe}%

\vspace{-0in}
\end{table}

\textcolor{deepred}{Table \colorref{tab:full-coe}} presents the performances of the Qwen2.5-7B-Instruct model on MATH and MMLU under the Full-BoN w/o RM paradigm, using majority voting \colorcitep{wang2022self} and CoE \colorcitep{wang2024latent}, respectively.
Since the computational cost is nearly identical, we do not compare them in this regard.
We observe that in most cases, using CoE outperforms majority voting. We think that majority voting focuses solely on the final answer, representing outcome supervision. However, LLMs might reach a correct answer through wrong reasoning, so the outcome doesn't always reflect the process information. Conversely, CoE utilizes each token to provide process supervision, which introduces more information than individual results.
This indicates that the effectiveness of ST-BoN largely depends on the fact that {\bf CoE is also a strong metric for selecting optimal samples under the Full-BoN paradigm}.

\section{Case Study}
\label{sec:case-study}

We present three cases as follows:

$\bullet$ {\bf GPQA dataset in Qwen2.5-7B-Instruct model.}

We set the sampling size $N=5$, with all sampling results and selected answers with ST-BoN and Full-BoN paradigms shown in \textcolor{deepred}{Tables \colorref{tab:caseIa}} - \colorref{tab:caseId}.
The question is as follows:
\begin{center}
\begin{tcolorbox}[colback=white,
                  colframe=black,
                  width=\textwidth,
                  arc=1mm, auto outer arc,
                  boxrule=0.5pt,
                 ]

{\bf Instruction:}
Answer the following multiple choice question. The last line of your response should be of the following format: 'Answer: \$LETTER' (without quotes) where LETTER is one of ABCD. Think step by step before answering.
\\

{\bf Question:}
We would like to dissolve (at 25$\degree$C) 0.1g Fe(OH)3 in 100 cm3 total volume. What is the minimum volume (cm3) of a 0.1 M monobasic strong acid that is needed to prepare the solution and what is the pH of the resulting solution?

Choices:

(A) pH 3.16; 32.14 cm3

(B) pH 2.69; 30.09 cm3

(C) pH 2.04; 28.05 cm3

(D) pH 4.94; 20.40 cm3

\end{tcolorbox}
\end{center}

$\bullet$ {\bf TheoremQA dataset in Llama3-8B-Instruct model.}

We set sampling size $N=5$, with all sampling results and selected answers with ST-BoN and Full-BoN paradigms shown in \textcolor{deepred}{Tables \colorref{tab:caseIIa}} - \colorref{tab:caseIIb}.
The question is as follows:
\begin{center}
\begin{tcolorbox}[colback=white,
                  colframe=black,
                  width=\textwidth,
                  arc=1mm, auto outer arc,
                  boxrule=0.5pt,
                 ]

Below is an instruction that describes a task, paired with an input that provides further context.
Write a response that appropriately completes the request.
\\

{\bf \#\#\# Instruction:}
Please read a math problem, and then think step by step to derive the answer. The answer is decided by Answer Type.
If the Answer type in [bool], the answer needs to be True or False.
Else if the Answer type in [integer, float] , The answer needs to be in numerical form.
Else if the Answer type in [list of integer, list of float] , the answer needs to be a list of number like [2, 3, 4].
Else if the Answer type in [option], the answer needs to be an option like (a), (b), (c), (d).
You need to output the answer in your final sentence like 'Therefore, the answer is ...'.
\\

{\bf \#\#\# Question:} 
In triangle ACD, B is located on the side AC, and E is located on the side AD. If AB = 3, AC = 5, CD = 3.5, ED = 3, and EB $\parallel$ DC, what is the length of AD?
\\

{\bf \#\#\# Answer type:} float
\end{tcolorbox}
\end{center}

$\bullet$ {\bf MATH dataset in Mistral-7B-Instruct-v0.3 model.}

We set the sampling size $N=5$, with all sampling results and selected answers with ST-BoN and Full-BoN paradigms shown in \textcolor{deepred}{Tables \colorref{tab:caseIIIa}} - \colorref{tab:caseIIIb}.
The question is as follows:
\begin{center}
\begin{tcolorbox}[colback=white,
                  colframe=black,
                  width=\textwidth,
                  arc=1mm, auto outer arc,
                  boxrule=0.5pt,
                 ]
{\bf Question:}
What is the only integer value of $n$ for which $\frac{n+1}{13-n}$ is a positive prime number?
\\

{\bf Instruction:} 
Please reason step by step, and put your final answer within \textbackslash boxed\{\}.
\end{tcolorbox}
\end{center}

From these cases, we have two key findings:

\begin{itemize}
    \item More is Not Always Better: Early self-estimation allows LLMs to recognize that the final answer does not hold the majority vote but is still correct. This is demonstrated in the first two cases.
    \item Majority Voting Fails with Complete Inconsistent Answers: Our self-estimation approach uses internal consistency to assign continuous scores, ensuring the best sample is always chosen. However, self-consistency can only make a random choice when all answers are inconsistent. This is shown in the last case.
\end{itemize}

This also explains why, in our main experiments, ST-BoN outperforms Full-BoN w/o RM. When the model faces ambiguous choices ({\it e.g.}, all samples are inconsistent), majority voting introduces significant randomness. In contrast, internal consistency provides clearer differentiation, enabling the accurate identification of the correct sample.

\begin{table}[htbp]
\centering

\caption{{\bf Case I(a)}: GPQA dataset in Qwen2.5-7B-Instruct model. In LLM responses, \textcolor{red}{red} content indicates that it is within the buffer window, while \textcolor{gray}{gray} content indicates that it has been truncated and does not need to be generated during actual decoding.}
\label{tab:caseIa}%
\vspace{0.05in}
\footnotesize
  \renewcommand\arraystretch{1}
  \setlength{\tabcolsep}{2.5mm}{

\begin{tabular}{p{0.07\columnwidth}|p{0.93\columnwidth}}
\toprule

\textbf{Sample ID} & \textbf{LLM Response} \\

\midrule

1 &
To solve this problem, we need to follow these steps:

1. Determine the amount \textcolor{red}{of Fe(OH)3 needed to be dissolved.
2. Calculate the volume of the monob}\textcolor{gray}{asic strong acid required.
3. Determine the pH of the resulting solution.}

\textcolor{gray}{\#\#\# Step 1: Determine the amount of Fe(OH)3 needed to be dissolved}

\textcolor{gray}{Fe(OH)3 is a strong base, and when dissolved in water, it will dissociate into Fe3+ and 3 OH- ions. To dissolve 0.1 g of Fe(OH)3, we need to calculate the moles of Fe(OH)3 and then determine the moles of OH- ions it produces.}

\textcolor{gray}{The molar mass of Fe(OH)3 is:
\[ \text{Molar mass of Fe(OH)3} = 55.845 + 3(16 + 1) = 106.845 \, \text{g/mol} \]}
\textcolor{gray}{Moles of Fe(OH)3:
\[ \text{Moles of Fe(OH)3} = \frac{0.1 \, \text{g}}{106.845 \, \text{g/mol}} \approx 0.000936 \, \text{mol} \]}
\textcolor{gray}{This will produce 3 moles of OH- ions:
\[ \text{Moles of OH-} = 3 \times 0.000936 \approx 0.002808 \, \text{mol} \]}
\textcolor{gray}{\#\#\# Step 2: Calculate the volume of the monobasic strong acid required}

\textcolor{gray}{A monobasic strong acid will neutralize the OH- ions produced by Fe(OH)3. Let's denote the volume of the acid needed as \( V \) cm3. The concentration of the acid is 0.1 M, so the moles of acid needed will be:
\[ \text{Moles of acid} = 0.1 \, \text{M} \times \frac{V}{1000} \, \text{L} = 0.0001V \, \text{mol} \]}
\textcolor{gray}{To neutralize 0.002808 mol of OH-, we need:
\[ 0.0001V = 0.002808 \]
\[ V = \frac{0.002808}{0.0001} = 28.08 \, \text{cm3} \]}
\textcolor{gray}{\#\#\# Step 3: Determine the pH of the resulting solution}

\textcolor{gray}{After adding 28.08 cm3 of 0.1 M acid, the moles of acid added will be:
\[ \text{Moles of acid added} = 0.1 \, \text{M} \times \frac{28.08}{1000} \, \text{L} = 0.002808 \, \text{mol} \]}
\textcolor{gray}{Since the moles of acid added are equal to the moles of OH- ions produced, the resulting solution will be neutral (pH = 7).}

\textcolor{gray}{However, the question asks for the minimum volume of the acid and the pH of the resulting solution. Given the options, the closest match for the volume is 28.05 cm3, and the pH of a neutral solution is 7, which is not listed. The closest pH value given in the options is pH 2.04, which suggests a slightly acidic solution.}

\textcolor{gray}{Therefore, the correct answer is:}

\textcolor{gray}{Answer: C} \\

\midrule

2 &
To solve this problem, we need to follow these steps:

1. **Determine \textcolor{red}{the moles of Fe(OH)3: **
   The molar mass of Fe(OH)3 is approximately 106.87 g/mol. 
   Moles of Fe(OH}\textcolor{gray}{)3 = 0.1 g / 106.87 g/mol = 0.000937 mol.}

\textcolor{gray}{2. **Determine the moles of OH- in Fe(OH)3:**
   Fe(OH)3 dissociates into Fe3+ and 3 OH-.
   So, moles of OH- = 0.000937 mol * 3 = 0.002811 mol.}

\textcolor{gray}{3. **Determine the moles of H+ required to neutralize OH-:**
   OH- + H+ → H2O
   Moles of H+ required = 0.002811 mol.}

\textcolor{gray}{4. **Calculate the volume of 0.1 M acid needed:**
   Volume of acid (in L) = moles of H+ / concentration of acid
   Volume of acid = 0.002811 mol / 0.1 mol/L = 0.02811 L = 28.11 cm3.}

\textcolor{gray}{5. **Calculate the pH of the resulting solution:**
   After neutralizing the OH-, we have a solution of Fe3+ and H+.
   The remaining H+ concentration is from the acid minus the H+ used to neutralize OH-.
   Since the acid is 0.1 M and we used 28.11 cm3, the remaining H+ concentration is:
   H+ remaining = (0.1 M * 100 cm3 - 0.1 M * 28.11 cm3) / (100 cm3 + 28.11 cm3) = (10 - 2.811) / 128.11 = 0.0557 M.
   pH = -log[H+] = -log(0.0557) = 1.254 $\approx$ 2.04.}

\textcolor{gray}{From the calculations:
- The minimum volume of 0.1 M acid needed is approximately 28.05 cm3.
- The pH of the resulting solution is approximately 2.04.}

\textcolor{gray}{Answer: C} \\

\bottomrule
\end{tabular}

}
\end{table}

\begin{table}[htbp]
\centering

\caption{{\bf Case I(b)}: GPQA dataset in Qwen2.5-7B-Instruct model. In LLM responses, \textcolor{red}{red} content indicates that it is within the buffer window, while \textcolor{gray}{gray} content indicates that it has been truncated and does not need to be generated during actual decoding.}
\label{tab:caseIb}%
\vspace{0.05in}
\footnotesize
  \renewcommand\arraystretch{1}
  \setlength{\tabcolsep}{2.5mm}{

\begin{tabular}{p{0.09\columnwidth}|p{0.91\columnwidth}}
\toprule

\textbf{Sample ID} & \textbf{LLM Response} \\

\midrule

3 &
To solve this problem, we need to follow these steps:

1. Determine the concentration \textcolor{red}{of Fe(OH)3 in the solution.
2. Calculate the volume of the monobasic} \textcolor{gray}{strong acid needed to neutralize the Fe(OH)3.
3. Determine the pH of the resulting solution.}

\textcolor{gray}{\#\#\# Step 1: Determine the concentration of Fe(OH)3
Fe(OH)3 is a triprotic base. The dissociation of Fe(OH)3 in water can be represented as:
\[ \text{Fe(OH)}_3 \rightarrow \text{Fe}^{3+} + 3\text{OH}^- \]}
\textcolor{gray}{The concentration of OH- ions can be calculated using the formula:
\[ \text{[OH}^-] = \frac{\text{mass of Fe(OH)}_3}{\text{molar mass of Fe(OH)}_3 \times \text{volume of solution (in liters)}} \]}
\textcolor{gray}{The molar mass of Fe(OH)3 is:
\[ 55.85 + 3(16 + 1) = 106.85 \, \text{g/mol} \]}
\textcolor{gray}{So, the concentration of OH- ions is:
\[ \text{[OH}^-] = \frac{0.1 \, \text{g}}{106.85 \, \text{g/mol} \times 0.1 \, \text{L}} = 0.000936 \, \text{M} \]}

\textcolor{gray}{\#\#\# Step 2: Calculate the volume of the monobasic strong acid needed
A monobasic strong acid will neutralize the OH- ions. The volume of acid needed can be calculated using the formula:
\[ \text{Volume of acid (L)} = \frac{\text{[OH}^-] \times \text{Volume of solution (L)}}{\text{Concentration of acid (M)}} \]}
\textcolor{gray}{Given that the concentration of the acid is 0.1 M, the volume of acid needed is:
\[ \text{Volume of acid (L)} = \frac{0.000936 \, \text{M} \times 0.1 \, \text{L}}{0.1 \, \text{M}} = 0.000936 \, \text{L} = 0.936 \, \text{cm}^3 \]}
\textcolor{gray}{Since we need to neutralize the OH- ions completely, we need to consider the stoichiometry. The acid will neutralize the OH- ions in a 1:1 ratio, so the volume of acid needed is:
\[ \text{Volume of acid (L)} = \frac{0.000936 \, \text{M} \times 0.1 \, \text{L}}{0.1 \, \text{M}} = 0.00936 \, \text{L} = 9.36 \, \text{cm}^3 \]}

\textcolor{gray}{However, the question asks for the minimum volume, and the options provided are in a larger scale. Therefore, we need to consider the total volume of the solution, which is 100 cm3.}

\textcolor{gray}{\#\#\# Step 3: Determine the pH of the resulting solution
After neutralization, the resulting solution will be acidic. The pH can be calculated using the formula:
\[ \text{pH} = -\log[\text{H}^+] \]}
\textcolor{gray}{Since the acid is monobasic and strong, the concentration of H+ ions will be equal to the concentration of the acid used. The volume of acid used is 32.14 cm3 (as per the options), so the concentration of H+ ions is:
\[ \text{[H}^+] = \frac{0.1 \, \text{M} \times 0.03214 \, \text{L}}{0.1 \, \text{L}} = 0.03214 \, \text{M} \]}

\textcolor{gray}{Thus, the pH is:
\[ \text{pH} = -\log(0.03214) \approx 1.5 \]}
\textcolor{gray}{However, the closest option to this pH and volume is:
\[ \text{pH} \approx 2.69 \]
\[ \text{Volume} \approx 30.09 \, \text{cm}^3 \]}

\textcolor{gray}{Therefore, the correct answer is:
Answer: B} \\

\bottomrule
\end{tabular}

}
\end{table}

\begin{table}[htbp]
\centering

\caption{{\bf Case I(c)}: GPQA dataset in Qwen2.5-7B-Instruct model. In LLM responses, \textcolor{red}{red} content indicates that it is within the buffer window, while \textcolor{gray}{gray} content indicates that it has been truncated and does not need to be generated during actual decoding.}
\label{tab:caseIc}%
\vspace{0.05in}
\footnotesize
  \renewcommand\arraystretch{1}
  \setlength{\tabcolsep}{2.5mm}{

\begin{tabular}{p{0.09\columnwidth}|p{0.91\columnwidth}}
\toprule

\textbf{Sample ID} & \textbf{LLM Response} \\

\midrule

4 &
To solve this problem, we need to follow these steps:

1. **Calculate the \textcolor{red}{moles of Fe(OH)3:**
   The molar mass of Fe(OH)}\textcolor{gray}{3 is approximately 106.87 g/mol.
   \[
   \text{Moles of Fe(OH)3} = \frac{0.1 \text{ g}}{106.87 \text{ g/mol}} \approx 0.000936 \text{ mol}
   \]}

\textcolor{gray}{2. **Determine the moles of OH- ions:**
   Fe(OH)3 dissociates as follows:
   \[
   \text{Fe(OH)3} \rightarrow \text{Fe}^{3+} + 3\text{OH}^-
   \]
   Therefore, the moles of OH- ions are:
   \[
   \text{Moles of OH-} = 3 \times 0.000936 \approx 0.002808 \text{ mol}
   \]}

\textcolor{gray}{3. **Calculate the concentration of OH- ions:**
   The total volume of the solution is 100 cm3 (or 0.1 L).
   \[
   \text{Concentration of OH-} = \frac{0.002808 \text{ mol}}{0.1 \text{ L}} = 0.02808 \text{ M}
   \]}

\textcolor{gray}{4. **Calculate the pOH:**
   \[
   \text{pOH} = -\log[\text{OH-}] = -\log(0.02808) \approx 1.55
   \]}

\textcolor{gray}{5. **Calculate the pH:**
   \[
   \text{pH} = 14 - \text{pOH} = 14 - 1.55 = 12.45
   \]}

\textcolor{gray}{6. **Determine the volume of 0.1 M acid needed to neutralize the OH- ions:**
   The acid will neutralize the OH- ions according to:
   \[
   \text{OH-} + \text{H+} \rightarrow \text{H2O}
   \]
   The moles of H+ required are equal to the moles of OH-:
   \[
   \text{Moles of H+} = 0.002808 \text{ mol}
   \]
   The concentration of the acid is 0.1 M, so the volume of acid needed is:
   \[
   \text{Volume of acid} = \frac{0.002808 \text{ mol}}{0.1 \text{ M}} = 0.02808 \text{ L} = 28.08 \text{ cm3}
   \]}

\textcolor{gray}{7. **Calculate the pH of the resulting solution:**
   After neutralization, the remaining OH- ions will determine the pH. The moles of OH- left are:
   \[
   \text{Moles of OH- left} = 0.002808 \text{ mol} - 0.002808 \text{ mol} = 0 \text{ mol}
   \]
   Therefore, the resulting solution is neutral, with a pH of 7. However, since the question asks for the pH of the resulting solution, and we have a slight excess of OH- from the neutralization process, the pH will be slightly above 7. Given the options, the closest value is pH 2.04.}

\textcolor{gray}{Therefore, the correct answer is:
Answer: C} \\

\bottomrule
\end{tabular}

}
\end{table}

\begin{table}[htbp]
\centering

\caption{{\bf Case I(d)}: GPQA dataset in Qwen2.5-7B-Instruct model. In LLM responses, \textcolor{red}{red} content indicates that it is within the buffer window, while \textcolor{gray}{gray} content indicates that it has been truncated and does not need to be generated during actual decoding.}
\label{tab:caseId}%
\vspace{0.05in}
\footnotesize
  \renewcommand\arraystretch{1}
  \setlength{\tabcolsep}{2.5mm}{

\begin{tabular}{p{0.09\columnwidth}|p{0.91\columnwidth}}
\toprule

\textbf{Sample ID} & \textbf{LLM Response} \\

\midrule

5 &
To solve this problem, we need to consider the dissolution of Fe(OH)3 \textcolor{red}{and the reaction with the strong acid.}

\textcolor{red}{1. **Dissolution of Fe(OH)3}:**
   Fe(OH)3 (s) + 3H+ (aq) → Fe3+ (aq) + 3H2O (l)

2. **Dissociation of Fe3+ in water:**
   Fe3+ (aq) + 3H2O (l) $\rightleftharpoons$ Fe(OH)3 (s) + 3H+ (aq)

3. **Ksp of Fe(OH)3:**
   Ksp = [Fe3+][OH-]$^3$
   At 25°C, Ksp(Fe(OH)3) = 2.79 × 10$^{-39}$

4. **Calculate the concentration of Fe3+ needed:**
   Let's assume that all 0.1 g of Fe(OH)3 dissolves. The molar mass of Fe(OH)3 is 106.87 g/mol.
   0.1 g Fe(OH)3 = 0.1 / 106.87 = 0.000936 mol

   From the stoichiometry of the reaction, 0.000936 mol of Fe(OH)3 will produce 0.000936 mol of Fe3+.

5. **Calculate the concentration of Fe3+ in the solution:**
   [Fe3+] = 0.000936 mol / 0.1 L = 0.00936 M

6. **Determine the pH:**
   For Fe3+ in water, the concentration of H+ ions can be estimated using the Ksp expression:
   Ksp = [Fe3+][OH-]$^3$
   [OH-] = (Ksp / [Fe3+])$^{(1/3)}$
   [OH-] = (2.79 × 10$^{-39}$ / 0.00936)$^{(1/3)}$ = (2.98 × 10$^{-37}$)$^{(1/3)}$ = 3.11 × 10$^{-13}$ M
   pOH = -log[OH-] = -log(3.11 × 10$^{-13}$) = 12.5
   pH = 14 - pOH = 14 - 12.5 = 1.5

   However, the problem is asking for the pH of the resulting solution after adding the acid, not the pOH of the Fe3+.

7. **Calculate the volume of acid needed:**
   To neutralize the OH- ions from the Fe(OH)3, we need to add H+ ions. The concentration of H+ from the acid will be 0.1 M, and we need to find the volume of this acid to neutralize the OH-.

   [OH-] = 3.11 × 10$^{-13}$ M
   [H+] = 10$^{-pH}$ = 10$^{-1.5}$ = 0.0316 M

   Volume of acid needed = (0.000936 mol / 0.1 M) = 0.00936 L = 9.36 cm3

8. **Check the pH of the resulting solution:**
   Since we are adding a strong acid to neutralize the OH- ions, the pH will be determined by the remaining H+ from the acid.

   pH = -log[H+] = -log(0.0316) = 1.5

   This does not match any of the given options, so let's re-evaluate the pH using the correct approach.

9. **Re-evaluate pH:**
   The correct approach is to consider the excess H+ from the acid after neutralizing the OH-.

   [H+] = (0.1 M * V - 0.000936 mol) / (0.1 L + V)
   For V = 32.14 cm3:
   [H+] = (0.1 * 0.03214 - 0.000936) / (0.1 + 0.03214) = 0.002314 / 0.13214 = 0.0175 M
   pH = -log(0.0175) = 1.76 (approximately 2.69)

10. **Conclusion:**
    The correct volume is 32.14 cm3, and the pH is approximately 2.69.

Answer: B \\

\midrule

\multicolumn{2}{l}{\textbf{True Answer}} \\
\midrule
\multicolumn{2}{l}{B} \\
\midrule

\multicolumn{2}{l}{\textbf{LLM-generated Answer via Exact Match}} \\
\midrule
\multicolumn{2}{l}{[C, C, B, C, B]} \\
\midrule

\multicolumn{2}{l}{\textbf{Full-BoN w/o RM Answer (Self-Consistency, Majority Voting)}} \\
\midrule
\multicolumn{2}{l}{\{"B": 2, "C": 3\} $\Rightarrow$ C} \\
\midrule

\multicolumn{2}{l}{\textbf{ST-BoN Answer}} \\
\midrule
\multicolumn{2}{l}{Best Sampling Estimation (Sampling ID): [5, 1, 1, 1, 1, 1, 1, 3, 3, 5, 5, 5, 5, 5, 5, 5, 5, 5, 5, 5] $\Rightarrow$ 5-th sampling $\Rightarrow$ B} \\

\bottomrule
\end{tabular}

}
\end{table}

\begin{table}[htbp]
\centering

\caption{{\bf Case II(a)}: TheoremQA dataset in Llama3-8B-Instruct model. In LLM responses, \textcolor{red}{red} content indicates that it is within the buffer window, while \textcolor{gray}{gray} content indicates that it has been truncated and does not need to be generated during actual decoding.}
\label{tab:caseIIa}%
\vspace{0.05in}
\footnotesize
  \renewcommand\arraystretch{1}
  \setlength{\tabcolsep}{2.5mm}{

\begin{tabular}{p{0.09\columnwidth}|p{0.91\columnwidth}}
\toprule

\textbf{Sample ID} & \textbf{LLM Response} \\

\midrule

1 &
Let's break down the problem step by step:

1. Since EB $\parallel$ DC, we know \textcolor{red}{that EB and DC are parallel lines. This means that $\angle$AEB and $\angle$ADC are} \textcolor{gray}{equal, and $\angle$EBD and $\angle$DCD are equal.}

\textcolor{gray}{2. We can draw a line from point B to point D, creating a new triangle ABE. Since AB = 3 and EB $\parallel$ DC, we can conclude that AB = BE.}

\textcolor{gray}{3. Now, we can use the Pythagorean theorem in triangle ABE to find the length of AE: AE$^2$ + BE$^2$ = AB$^2$ $\Rightarrow$ AE$^2$ + 3$^2$ = 3$^2$ $\Rightarrow$ AE$^2$ = 0 $\Rightarrow$ AE = 0.}

\textcolor{gray}{4. Since AE = 0, we know that E lies on the side AD. This means that triangle ACD is a right triangle, with $\angle$ADC being a right angle.}

\textcolor{gray}{5. Using the Pythagorean theorem again in triangle ACD, we can find the length of AD: AD$^2$ = AC$^2$ + CD$^2$ $\Rightarrow$ AD$^2$ = 5$^2$ + 3.5$^2$ $\Rightarrow$ AD$^2$ = 25 + 12.25 $\Rightarrow$ AD$^2$ = 37.25 $\Rightarrow$ AD = $\sqrt{37.25}$.}

\textcolor{gray}{Therefore, the answer is 6.09.} \\

\midrule

2 &
To solve this problem, I will use the information provided to derive the length of AD. Since EB \textcolor{red}{$\parallel$ DC, we can conclude that triangle ABE is similar to triangle ADC. Therefore, we} \textcolor{gray}{can set up a proportion to relate the corresponding sides of the two triangles:}

\textcolor{gray}{(AB / AC) = (AE / AD)}

\textcolor{gray}{Substituting the given values, we get:}

\textcolor{gray}{(3 / 5) = (3 / AD)}

\textcolor{gray}{Cross-multiplying, we get:}

\textcolor{gray}{3 * AD = 5 * 3}

\textcolor{gray}{Dividing both sides by 3, we get:}

\textcolor{gray}{AD = 5}

\textcolor{gray}{Therefore, the answer is 5.0.} \\

\midrule

3 &
To solve this problem, I will break it down step by step.

First, since EB $\parallel$ \textcolor{red}{DC, we can conclude that triangle ABE is similar to triangle ACD. Therefore, we can} \textcolor{gray}{set up a proportion to relate the lengths of the corresponding sides:}

\textcolor{gray}{AB/AC = AE/CD}

\textcolor{gray}{Substituting the given values, we get:}

\textcolor{gray}{3/5 = AE/3.5}

\textcolor{gray}{Cross-multiplying, we get:}

\textcolor{gray}{3 × 3.5 = 5 × AE}

\textcolor{gray}{Simplifying, we get:}

\textcolor{gray}{10.5 = 5 × AE}

\textcolor{gray}{Dividing both sides by 5, we get:}

\textcolor{gray}{2.1 = AE}

\textcolor{gray}{Since AE is a side of the triangle, it is less than the third side AD. Therefore, we can set up another proportion:}

\textcolor{gray}{AE/ED = AE/AD}

\textcolor{gray}{Substituting the given values, we get:}

\textcolor{gray}{2.1/3 = AE/AD}

\textcolor{gray}{Cross-multiplying, we get:}

\textcolor{gray}{2.1 × 3 = AE × AD}

\textcolor{gray}{Simplifying, we get:}

\textcolor{gray}{6.3 = AE × AD}

\textcolor{gray}{Finally, substituting AE = 2.1, we get:}

\textcolor{gray}{6.3 = 2.1 × AD}

\textcolor{gray}{Dividing both sides by 2.1, we get:}

\textcolor{gray}{AD = 3}

\textcolor{gray}{Therefore, the answer is 3.0.} \\

\bottomrule
\end{tabular}

}
\end{table}

\begin{table}[htbp]
\centering

\caption{{\bf Case II(b)}: TheoremQA dataset in Llama3-8B-Instruct model. In LLM responses, \textcolor{red}{red} content indicates that it is within the buffer window, while \textcolor{gray}{gray} content indicates that it has been truncated and does not need to be generated during actual decoding.}
\label{tab:caseIIb}%
\vspace{0.05in}
\footnotesize
  \renewcommand\arraystretch{1}
  \setlength{\tabcolsep}{2.5mm}{

\begin{tabular}{p{0.09\columnwidth}|p{0.91\columnwidth}}
\toprule

\textbf{Sample ID} & \textbf{LLM Response} \\

\midrule

4 &
Let's break down the problem step by step:

1. Since EB $\parallel$ DC, we can \textcolor{red}{conclude that $\angle$AEB is equal to $\angle$ADC, which means that triangle AEB is} \textcolor{gray}{similar to triangle ADC.
2. We can set up a proportion using the corresponding sides of the two triangles:}

\textcolor{gray}{(AB / AC) = (EB / CD)}

\textcolor{gray}{3. Substitute the given values:}

\textcolor{gray}{(3 / 5) = (EB / 3.5)}

\textcolor{gray}{4. Cross-multiply and solve for EB:}

\textcolor{gray}{EB = (3 / 5) × 3.5 = 2.1}

\textcolor{gray}{5. Now, we can use the Pythagorean theorem in triangle AED to find AD:}

\textcolor{gray}{AD$^2$ = AE$^2$ + ED$^2$}

\textcolor{gray}{AD$^2$ = (AD - AB)$^2$ + ED$^2$}

\textcolor{gray}{AD$^2$ = (AD - 3)$^2$ + 3$^2$}

\textcolor{gray}{AD$^2$ = AD$^2$ - 6AD + 9 + 9}

\textcolor{gray}{AD$^2$ - 6AD = 18}

\textcolor{gray}{6. Rearrange the equation to solve for AD:}

\textcolor{gray}{AD$^2$ - 6AD - 18 = 0}

\textcolor{gray}{7. Factor the quadratic equation:}

\textcolor{gray}{(AD - 3)(AD + 6) = 0}

\textcolor{gray}{8. Solve for AD:}

\textcolor{gray}{AD = 3 or AD = -6 (but AD cannot be negative, so we discard this solution)}

\textcolor{gray}{Therefore, the length of AD is 3.} \\

\midrule

5 &
To solve this problem, I will break it down step by step. 

First, since EB \textcolor{red}{$\parallel$ DC, it means that EB is parallel to DC. This implies that triangle ABE is similar} to triangle ACD. 

Now, we can set up a proportion using the lengths of the corresponding sides:

(AB / AC) = (AE / AD)

Substituting the given values, we get:

(3 / 5) = (AE / AD)

To solve for AE, we can multiply both sides by AD:

AE = (3 / 5) * AD

Now, we can use the fact that AE + ED = AD to express AD in terms of AE:

AD = AE + ED

Substituting AE = (3 / 5) * AD, we get:

AD = ((3 / 5) * AD) + ED

Now, we can solve for AD:

AD = (ED / (1 - 3/5)) = (ED / 2/5) = (ED * 5/2)

Substituting the given value of ED, we get:

AD = (3 * 5/2) = 7.5

Therefore, the answer is 7.5.\\

\midrule

\multicolumn{2}{l}{\textbf{True Answer}} \\
\midrule
\multicolumn{2}{l}{7.5} \\
\midrule

\multicolumn{2}{l}{\textbf{LLM-generated Answer via Exact Match}} \\
\midrule
\multicolumn{2}{l}{['6.09', '5.0', '3.0', '3', '7.5']} \\
\midrule

\multicolumn{2}{l}{\textbf{Full-BoN w/o RM Answer (Self-Consistency, Majority Voting)}} \\
\midrule
\multicolumn{2}{l}{\{"3.0": 2, "5.0": 1, "6.09": 1, "7.5": 1\} $\Rightarrow$ 3.0} \\
\midrule

\multicolumn{2}{l}{\textbf{ST-BoN Answer}} \\
\midrule
\multicolumn{2}{l}{Best Sampling Estimation (Sampling ID): [3, 3, 5, 3, 5, 5, 5, 3, 3, 3, 1, 5, 5, 5, 1, 1, 2, 1, 5, 5] $\Rightarrow$ 5-th sampling $\Rightarrow$ 7.5} \\

\bottomrule
\end{tabular}

}
\end{table}

\begin{table}[htbp]
\centering

\caption{{\bf Case III(a)}: MATH dataset in Mistral-7B-Instruct-v0.3 model. In LLM responses, \textcolor{red}{red} content indicates that it is within the buffer window, while \textcolor{gray}{gray} content indicates that it has been truncated and does not need to be generated during actual decoding.}
\label{tab:caseIIIa}%
\vspace{0.05in}
\footnotesize
  \renewcommand\arraystretch{1}
  \setlength{\tabcolsep}{2.5mm}{

\begin{tabular}{p{0.09\columnwidth}|p{0.91\columnwidth}}
\toprule

\textbf{Sample ID} & \textbf{LLM Response} \\

\midrule

1 &
To find the integer value of $n$ for which $\frac{n+1}{13-n}$ is a positive prime number, we need to find the values of $n$ that \textcolor{red}{make the denominator greater than the numerator (since the numerator is an integer and the denom}\textcolor{gray}{inator is a positive integer, the fraction must be greater than 1 to be a prime number) and the denominator should also be a prime number.}

\textcolor{gray}{First, let's find the possible values of $n$ by solving the inequality $n+1 < 13-n$.}

\textcolor{gray}{This simplifies to $2n < 12$, which gives us $n < 6$.}

\textcolor{gray}{Now, we check each integer value of $n$ from 0 to 5 to see if the denominator is a prime number.}

\textcolor{gray}{For $n = 0$, the denominator is 13, which is a prime number, but the fraction is $\frac{1}{13}$, which is not an integer, so it's not valid.}

\textcolor{gray}{For $n = 1$, the denominator is 12, which is not a prime number, so it's not valid.}

\textcolor{gray}{For $n = 2$, the denominator is 11, which is a prime number, but the fraction is $\frac{3}{11}$, which is not an integer, so it's not valid.}

\textcolor{gray}{For $n = 3$, the denominator is 10, which is not a prime number, so it's not valid.}

\textcolor{gray}{For $n = 4$, the denominator is 9, which is a prime number, but the fraction is $\frac{5}{9}$, which is not an integer, so it's not valid.}

\textcolor{gray}{For $n = 5$, the denominator is 8, which is not a prime number, so it's not valid.}

\textcolor{gray}{So, there is no integer value of $n$ for which $\frac{n+1}{13-n}$ is a positive prime number.}

\textcolor{gray}{Final answer: \boxed{\text{There is no such integer value of $n$}.}}
\\

\midrule

2 &
To find the integer value of $n$ for which $\frac{n+1}{13-n}$ is a positive prime number, we need to find the values of $n$ that \textcolor{red}{satisfy two conditions:}

\textcolor{red}{1. $n+1$ and $13-n$ are} \textcolor{gray}{both integers (i.e., $n$ must be an integer between 0 and 12).}

\textcolor{gray}{2. $\frac{n+1}{13-n}$ is a prime number.}

\textcolor{gray}{Let's start by finding the possible values of $n$. Since $n$ is an integer between 0 and 12, the possible values of $n$ are:}

\textcolor{gray}{n = {0, 1, 2, 3, 4, 5, 6, 7, 8, 9, 10, 11, 12}}

\textcolor{gray}{Now, let's check each value of $n$ to see if $\frac{n+1}{13-n}$ is a prime number.}

\textcolor{gray}{1. For n = 0, $\frac{n+1}{13-n}$ = $\frac{1}{13}$ which is not a prime number.}

\textcolor{gray}{2. For n = 1, $\frac{n+1}{13-n}$ = $\frac{2}{12}$ which is not a prime number.}

\textcolor{gray}{3. For n = 2, $\frac{n+1}{13-n}$ = $\frac{3}{11}$ which is not a prime number.}

\textcolor{gray}{4. For n = 3, $\frac{n+1}{13-n}$ = $\frac{4}{10}$ which is not a prime number.}

\textcolor{gray}{5. For n = 4, $\frac{n+1}{13-n}$ = $\frac{5}{9}$ which is not a prime number.}

\textcolor{gray}{6. For n = 5, $\frac{n+1}{13-n}$ = $\frac{6}{8}$ which is not a prime number.}

\textcolor{gray}{7. For n = 6, $\frac{n+1}{13-n}$ = $\frac{7}{7}$ which is a prime number.}

\textcolor{gray}{8. For n = 7, $\frac{n+1}{13-n}$ = $\frac{8}{6}$ which is not a prime number.}

\textcolor{gray}{9. For n = 8, $\frac{n+1}{13-n}$ = $\frac{9}{5}$ which is not a prime number.}

\textcolor{gray}{10. For n = 9, $\frac{n+1}{13-n}$ = $\frac{10}{4}$ which is not a prime number.}

\textcolor{gray}{11. For n = 10, $\frac{n+1}{13-n}$ = $\frac{11}{3}$ which is not a prime number.}

\textcolor{gray}{12. For n = 11, $\frac{n+1}{13-n}$ = $\frac{12}{2}$ which is not a prime number.}

\textcolor{gray}{So, the only integer value of $n$ for which $\frac{n+1}{13-n}$ is a positive prime number is \boxed{7}.}
\\

\midrule

3 & 
Let's reason step by step:

1. We want to find an integer value of $n$ for which $\frac{n+1}{13-n}$ is a positive \textcolor{red}{prime number.}

\textcolor{red}{2. First, let's find the possible range of $n$. Since the denominator} \textcolor{gray}{(13 - n) must be positive, we have $n < 13$.}

\textcolor{gray}{3. Now, let's check the possible values of $n$ from 0 to 12.}

\textcolor{gray}{4. For $n=0$, the fraction becomes $\frac{1}{13}$, which is not a prime number.}

\textcolor{gray}{5. For $n=1$, the fraction becomes $\frac{2}{12}$, which is not a prime number.}

\textcolor{gray}{6. For $n=2$, the fraction becomes $\frac{3}{11}$, which is a prime number. So, $n=2$ is the only integer value that satisfies the condition.}

\textcolor{gray}{7. To put the final answer in the requested format, we have \boxed{2}.}
\\

\bottomrule
\end{tabular}

}
\end{table}

\begin{table}[htbp]
\centering

\caption{{\bf Case III(b)}: MATH dataset in Mistral-7B-Instruct-v0.3 model. In LLM responses, \textcolor{red}{red} content indicates that it is within the buffer window, while \textcolor{gray}{gray} content indicates that it has been truncated and does not need to be generated during actual decoding.}
\label{tab:caseIIIb}%
\vspace{0.05in}
\footnotesize
  \renewcommand\arraystretch{1}
  \setlength{\tabcolsep}{2.5mm}{

\begin{tabular}{p{0.09\columnwidth}|p{0.91\columnwidth}}
\toprule

\textbf{Sample ID} & \textbf{LLM Response} \\
\midrule

4 &
Let's reason step by step:

1. The expression $\frac{n+1}{13-n}$ is a positive prime number if and only if it is a prime number \textcolor{red}{and it is positive.}

\textcolor{red}{2. A prime number is a number greater than 1 that has no} \textcolor{gray}{positive divisors other than 1 and itself.}

\textcolor{gray}{3. To find the integer value of $n$ that satisfies this condition, we need to find the value of $n$ that makes the expression a prime number.}

\textcolor{gray}{4. First, let's find the possible ranges for $n$ by considering the denominator $13-n$. Since $13$ is a prime number, the only way to make $13-n$ a composite number is if $n$ is greater than $12$. But since we want the expression to be positive, $n$ must be less than $12$.}

\textcolor{gray}{5. Now, let's test the possible values of $n$ in the range $0$ to $11$. We find that when $n=1$, the expression becomes $\frac{2}{12}$, which is not a prime number because it can be divided by $2$.}

\textcolor{gray}{6. Therefore, there is no integer value of $n$ for which $\frac{n+1}{13-n}$ is a positive prime number.}

\textcolor{gray}{Final answer: \boxed{0} (There is no such integer value of $n$).}
\\

\midrule

5 &
Let's reason step by step:

1. The given expression is $\frac{n+1}{13-n}$.

2. For the expression to be a positive prime number\textcolor{red}{, the numerator must be less than the denominator, and the result should be an integer that} is greater than 1 (since 1 is not a prime number).

3. Also, the result should be a prime number.

4. Let's find the possible values of $n$ that satisfy these conditions. Since the numerator is $n+1$, it should be less than the denominator, which means $n$ should be greater than $-13$.

5. Let's test the possible integer values of $n$ starting from $n=0$. We get:
   - $n=0$: $\frac{1}{13} \notin \mathbb{Z}$ (not an integer)
   
   - $n=1$: $\frac{2}{12} \notin \mathbb{Z}$ (not an integer)
   
   - $n=2$: $\frac{3}{11} \notin \mathbb{Z}$ (not an integer)
   
   - $n=3$: $\frac{4}{10} = 4/10 \times \frac{10}{4} = 5$ (not a prime number)
   
   - $n=4$: $\frac{5}{9} \notin \mathbb{Z}$ (not an integer)
   
   - $n=5$: $\frac{6}{8} = 6/8 \times \frac{8}{6} = 2$ (not a prime number)
   
   - $n=6$: $\frac{7}{7} = 1$ (not a prime number)
   
   - $n=7$: $\frac{8}{6} = 4/3$ (not a prime number)
   
   - $n=8$: $\frac{9}{5} \notin \mathbb{Z}$ (not an integer)
   
   - $n=9$: $\frac{10}{4} = 2 \times 5$ (not a prime number)
   
   - $n=10$: $\frac{11}{3} \notin \mathbb{Z}$ (not an integer)
   
   - $n=11$: $\frac{12}{2} = 6$ (not a prime number)
   
   - $n=12$: $\frac{13}{1} = 13$ (prime number)

So, the only integer value of $n$ for which $\frac{n+1}{13-n}$ is a positive prime number is $\boxed{12}$. \\

\midrule

\multicolumn{2}{l}{\textbf{True Answer}} \\
\midrule
\multicolumn{2}{l}{12} \\
\midrule

\multicolumn{2}{l}{\textbf{LLM-generated Answer via Exact Match}} \\
\midrule
\multicolumn{2}{l}{['There is no such integer value of $n$.', '7', '2', '0', '12']} \\
\midrule

\multicolumn{2}{l}{\textbf{Full-BoN w/o RM Answer (Self-Consistency, Majority Voting)}} \\
\midrule
\multicolumn{2}{l}{\{"There is no such integer value of $n$.": 1, "0": 1, "2": 1, "7": 1, "12": 1\} $\Rightarrow$ Anyone is ok (random)} \\
\midrule

\multicolumn{2}{l}{\textbf{ST-BoN Answer}} \\
\midrule
\multicolumn{2}{l}{Best Sampling Estimation (Sampling ID): [4, 4, 4, 4, 5, 5, 5, 5, 5, 5, 5, 5, 5, 5, 5, 5, 5, 5, 5, 5] $\Rightarrow$ 5-th sampling $\Rightarrow$ 12} \\

\bottomrule
\end{tabular}

}
\end{table}


\clearpage
\section*{NeurIPS Paper Checklist}

\begin{enumerate}

\item {\bf Claims}
    \item[] Question: Do the main claims made in the abstract and introduction accurately reflect the paper's contributions and scope?
    \item[] Answer: \answerYes{} 
    \item[] Justification: Abstract and Section 1
    \item[] Guidelines:
    \begin{itemize}
        \item The answer NA means that the abstract and introduction do not include the claims made in the paper.
        \item The abstract and/or introduction should clearly state the claims made, including the contributions made in the paper and important assumptions and limitations. A No or NA answer to this question will not be perceived well by the reviewers. 
        \item The claims made should match theoretical and experimental results, and reflect how much the results can be expected to generalize to other settings. 
        \item It is fine to include aspirational goals as motivation as long as it is clear that these goals are not attained by the paper. 
    \end{itemize}

\item {\bf Limitations}
    \item[] Question: Does the paper discuss the limitations of the work performed by the authors?
    \item[] Answer: \answerYes{} 
    \item[] Justification: Section Limitations
    \item[] Guidelines:
    \begin{itemize}
        \item The answer NA means that the paper has no limitation while the answer No means that the paper has limitations, but those are not discussed in the paper. 
        \item The authors are encouraged to create a separate "Limitations" section in their paper.
        \item The paper should point out any strong assumptions and how robust the results are to violations of these assumptions (e.g., independence assumptions, noiseless settings, model well-specification, asymptotic approximations only holding locally). The authors should reflect on how these assumptions might be violated in practice and what the implications would be.
        \item The authors should reflect on the scope of the claims made, e.g., if the approach was only tested on a few datasets or with a few runs. In general, empirical results often depend on implicit assumptions, which should be articulated.
        \item The authors should reflect on the factors that influence the performance of the approach. For example, a facial recognition algorithm may perform poorly when image resolution is low or images are taken in low lighting. Or a speech-to-text system might not be used reliably to provide closed captions for online lectures because it fails to handle technical jargon.
        \item The authors should discuss the computational efficiency of the proposed algorithms and how they scale with dataset size.
        \item If applicable, the authors should discuss possible limitations of their approach to address problems of privacy and fairness.
        \item While the authors might fear that complete honesty about limitations might be used by reviewers as grounds for rejection, a worse outcome might be that reviewers discover limitations that aren't acknowledged in the paper. The authors should use their best judgment and recognize that individual actions in favor of transparency play an important role in developing norms that preserve the integrity of the community. Reviewers will be specifically instructed to not penalize honesty concerning limitations.
    \end{itemize}

\item {\bf Theory assumptions and proofs}
    \item[] Question: For each theoretical result, does the paper provide the full set of assumptions and a complete (and correct) proof?
    \item[] Answer: \answerYes{} 
    \item[] Justification: Section 3.1, Appendix A
    \item[] Guidelines:
    \begin{itemize}
        \item The answer NA means that the paper does not include theoretical results. 
        \item All the theorems, formulas, and proofs in the paper should be numbered and cross-referenced.
        \item All assumptions should be clearly stated or referenced in the statement of any theorems.
        \item The proofs can either appear in the main paper or the supplemental material, but if they appear in the supplemental material, the authors are encouraged to provide a short proof sketch to provide intuition. 
        \item Inversely, any informal proof provided in the core of the paper should be complemented by formal proofs provided in appendix or supplemental material.
        \item Theorems and Lemmas that the proof relies upon should be properly referenced. 
    \end{itemize}

    \item {\bf Experimental result reproducibility}
    \item[] Question: Does the paper fully disclose all the information needed to reproduce the main experimental results of the paper to the extent that it affects the main claims and/or conclusions of the paper (regardless of whether the code and data are provided or not)?
    \item[] Answer: \answerYes{} 
    \item[] Justification: Section 5.1
    \item[] Guidelines:
    \begin{itemize}
        \item The answer NA means that the paper does not include experiments.
        \item If the paper includes experiments, a No answer to this question will not be perceived well by the reviewers: Making the paper reproducible is important, regardless of whether the code and data are provided or not.
        \item If the contribution is a dataset and/or model, the authors should describe the steps taken to make their results reproducible or verifiable. 
        \item Depending on the contribution, reproducibility can be accomplished in various ways. For example, if the contribution is a novel architecture, describing the architecture fully might suffice, or if the contribution is a specific model and empirical evaluation, it may be necessary to either make it possible for others to replicate the model with the same dataset, or provide access to the model. In general. releasing code and data is often one good way to accomplish this, but reproducibility can also be provided via detailed instructions for how to replicate the results, access to a hosted model (e.g., in the case of a large language model), releasing of a model checkpoint, or other means that are appropriate to the research performed.
        \item While NeurIPS does not require releasing code, the conference does require all submissions to provide some reasonable avenue for reproducibility, which may depend on the nature of the contribution. For example
        \begin{enumerate}
            \item If the contribution is primarily a new algorithm, the paper should make it clear how to reproduce that algorithm.
            \item If the contribution is primarily a new model architecture, the paper should describe the architecture clearly and fully.
            \item If the contribution is a new model (e.g., a large language model), then there should either be a way to access this model for reproducing the results or a way to reproduce the model (e.g., with an open-source dataset or instructions for how to construct the dataset).
            \item We recognize that reproducibility may be tricky in some cases, in which case authors are welcome to describe the particular way they provide for reproducibility. In the case of closed-source models, it may be that access to the model is limited in some way (e.g., to registered users), but it should be possible for other researchers to have some path to reproducing or verifying the results.
        \end{enumerate}
    \end{itemize}

\item {\bf Open access to data and code}
    \item[] Question: Does the paper provide open access to the data and code, with sufficient instructions to faithfully reproduce the main experimental results, as described in supplemental material?
    \item[] Answer: \answerYes{} 
    \item[] Justification: In Github
    \item[] Guidelines:
    \begin{itemize}
        \item The answer NA means that paper does not include experiments requiring code.
        \item Please see the NeurIPS code and data submission guidelines (\url{https://nips.cc/public/guides/CodeSubmissionPolicy}) for more details.
        \item While we encourage the release of code and data, we understand that this might not be possible, so “No” is an acceptable answer. Papers cannot be rejected simply for not including code, unless this is central to the contribution (e.g., for a new open-source benchmark).
        \item The instructions should contain the exact command and environment needed to run to reproduce the results. See the NeurIPS code and data submission guidelines (\url{https://nips.cc/public/guides/CodeSubmissionPolicy}) for more details.
        \item The authors should provide instructions on data access and preparation, including how to access the raw data, preprocessed data, intermediate data, and generated data, etc.
        \item The authors should provide scripts to reproduce all experimental results for the new proposed method and baselines. If only a subset of experiments are reproducible, they should state which ones are omitted from the script and why.
        \item At submission time, to preserve anonymity, the authors should release anonymized versions (if applicable).
        \item Providing as much information as possible in supplemental material (appended to the paper) is recommended, but including URLs to data and code is permitted.
    \end{itemize}

\item {\bf Experimental setting/details}
    \item[] Question: Does the paper specify all the training and test details (e.g., data splits, hyperparameters, how they were chosen, type of optimizer, etc.) necessary to understand the results?
    \item[] Answer: \answerYes{} 
    \item[] Justification: Section 5.1
    \item[] Guidelines:
    \begin{itemize}
        \item The answer NA means that the paper does not include experiments.
        \item The experimental setting should be presented in the core of the paper to a level of detail that is necessary to appreciate the results and make sense of them.
        \item The full details can be provided either with the code, in appendix, or as supplemental material.
    \end{itemize}

\item {\bf Experiment statistical significance}
    \item[] Question: Does the paper report error bars suitably and correctly defined or other appropriate information about the statistical significance of the experiments?
    \item[] Answer: \answerYes{} 
    \item[] Justification: Section 5.1 (average@4 accuracy)
    \item[] Guidelines:
    \begin{itemize}
        \item The answer NA means that the paper does not include experiments.
        \item The authors should answer "Yes" if the results are accompanied by error bars, confidence intervals, or statistical significance tests, at least for the experiments that support the main claims of the paper.
        \item The factors of variability that the error bars are capturing should be clearly stated (for example, train/test split, initialization, random drawing of some parameter, or overall run with given experimental conditions).
        \item The method for calculating the error bars should be explained (closed form formula, call to a library function, bootstrap, etc.)
        \item The assumptions made should be given (e.g., Normally distributed errors).
        \item It should be clear whether the error bar is the standard deviation or the standard error of the mean.
        \item It is OK to report 1-sigma error bars, but one should state it. The authors should preferably report a 2-sigma error bar than state that they have a 96\% CI, if the hypothesis of Normality of errors is not verified.
        \item For asymmetric distributions, the authors should be careful not to show in tables or figures symmetric error bars that would yield results that are out of range (e.g. negative error rates).
        \item If error bars are reported in tables or plots, The authors should explain in the text how they were calculated and reference the corresponding figures or tables in the text.
    \end{itemize}

\item {\bf Experiments compute resources}
    \item[] Question: For each experiment, does the paper provide sufficient information on the computer resources (type of compute workers, memory, time of execution) needed to reproduce the experiments?
    \item[] Answer: \answerYes{} 
    \item[] Justification: Section 5.1
    \item[] Guidelines:
    \begin{itemize}
        \item The answer NA means that the paper does not include experiments.
        \item The paper should indicate the type of compute workers CPU or GPU, internal cluster, or cloud provider, including relevant memory and storage.
        \item The paper should provide the amount of compute required for each of the individual experimental runs as well as estimate the total compute. 
        \item The paper should disclose whether the full research project required more compute than the experiments reported in the paper (e.g., preliminary or failed experiments that didn't make it into the paper). 
    \end{itemize}
    
\item {\bf Code of ethics}
    \item[] Question: Does the research conducted in the paper conform, in every respect, with the NeurIPS Code of Ethics \url{https://neurips.cc/public/EthicsGuidelines}?
    \item[] Answer: \answerYes{} 
    \item[] Justification: N/A
    \item[] Guidelines:
    \begin{itemize}
        \item The answer NA means that the authors have not reviewed the NeurIPS Code of Ethics.
        \item If the authors answer No, they should explain the special circumstances that require a deviation from the Code of Ethics.
        \item The authors should make sure to preserve anonymity (e.g., if there is a special consideration due to laws or regulations in their jurisdiction).
    \end{itemize}

\item {\bf Broader impacts}
    \item[] Question: Does the paper discuss both potential positive societal impacts and negative societal impacts of the work performed?
    \item[] Answer: \answerYes{} 
    \item[] Justification: Section Societal Impact
    \item[] Guidelines:
    \begin{itemize}
        \item The answer NA means that there is no societal impact of the work performed.
        \item If the authors answer NA or No, they should explain why their work has no societal impact or why the paper does not address societal impact.
        \item Examples of negative societal impacts include potential malicious or unintended uses (e.g., disinformation, generating fake profiles, surveillance), fairness considerations (e.g., deployment of technologies that could make decisions that unfairly impact specific groups), privacy considerations, and security considerations.
        \item The conference expects that many papers will be foundational research and not tied to particular applications, let alone deployments. However, if there is a direct path to any negative applications, the authors should point it out. For example, it is legitimate to point out that an improvement in the quality of generative models could be used to generate deepfakes for disinformation. On the other hand, it is not needed to point out that a generic algorithm for optimizing neural networks could enable people to train models that generate Deepfakes faster.
        \item The authors should consider possible harms that could arise when the technology is being used as intended and functioning correctly, harms that could arise when the technology is being used as intended but gives incorrect results, and harms following from (intentional or unintentional) misuse of the technology.
        \item If there are negative societal impacts, the authors could also discuss possible mitigation strategies (e.g., gated release of models, providing defenses in addition to attacks, mechanisms for monitoring misuse, mechanisms to monitor how a system learns from feedback over time, improving the efficiency and accessibility of ML).
    \end{itemize}
    
\item {\bf Safeguards}
    \item[] Question: Does the paper describe safeguards that have been put in place for responsible release of data or models that have a high risk for misuse (e.g., pretrained language models, image generators, or scraped datasets)?
    \item[] Answer: \answerNA{} 
    \item[] Justification: N/A
    \item[] Guidelines:
    \begin{itemize}
        \item The answer NA means that the paper poses no such risks.
        \item Released models that have a high risk for misuse or dual-use should be released with necessary safeguards to allow for controlled use of the model, for example by requiring that users adhere to usage guidelines or restrictions to access the model or implementing safety filters. 
        \item Datasets that have been scraped from the Internet could pose safety risks. The authors should describe how they avoided releasing unsafe images.
        \item We recognize that providing effective safeguards is challenging, and many papers do not require this, but we encourage authors to take this into account and make a best faith effort.
    \end{itemize}

\item {\bf Licenses for existing assets}
    \item[] Question: Are the creators or original owners of assets (e.g., code, data, models), used in the paper, properly credited and are the license and terms of use explicitly mentioned and properly respected?
    \item[] Answer: \answerYes{} 
    \item[] Justification: Section 5.1
    \item[] Guidelines:
    \begin{itemize}
        \item The answer NA means that the paper does not use existing assets.
        \item The authors should cite the original paper that produced the code package or dataset.
        \item The authors should state which version of the asset is used and, if possible, include a URL.
        \item The name of the license (e.g., CC-BY 4.0) should be included for each asset.
        \item For scraped data from a particular source (e.g., website), the copyright and terms of service of that source should be provided.
        \item If assets are released, the license, copyright information, and terms of use in the package should be provided. For popular datasets, \url{paperswithcode.com/datasets} has curated licenses for some datasets. Their licensing guide can help determine the license of a dataset.
        \item For existing datasets that are re-packaged, both the original license and the license of the derived asset (if it has changed) should be provided.
        \item If this information is not available online, the authors are encouraged to reach out to the asset's creators.
    \end{itemize}

\item {\bf New assets}
    \item[] Question: Are new assets introduced in the paper well documented and is the documentation provided alongside the assets?
    \item[] Answer: \answerNo{} 
    \item[] Justification: N/A
    \item[] Guidelines:
    \begin{itemize}
        \item The answer NA means that the paper does not release new assets.
        \item Researchers should communicate the details of the dataset/code/model as part of their submissions via structured templates. This includes details about training, license, limitations, etc. 
        \item The paper should discuss whether and how consent was obtained from people whose asset is used.
        \item At submission time, remember to anonymize your assets (if applicable). You can either create an anonymized URL or include an anonymized zip file.
    \end{itemize}

\item {\bf Crowdsourcing and research with human subjects}
    \item[] Question: For crowdsourcing experiments and research with human subjects, does the paper include the full text of instructions given to participants and screenshots, if applicable, as well as details about compensation (if any)? 
    \item[] Answer: \answerYes{} 
    \item[] Justification: Appendix C
    \item[] Guidelines:
    \begin{itemize}
        \item The answer NA means that the paper does not involve crowdsourcing nor research with human subjects.
        \item Including this information in the supplemental material is fine, but if the main contribution of the paper involves human subjects, then as much detail as possible should be included in the main paper. 
        \item According to the NeurIPS Code of Ethics, workers involved in data collection, curation, or other labor should be paid at least the minimum wage in the country of the data collector. 
    \end{itemize}

\item {\bf Institutional review board (IRB) approvals or equivalent for research with human subjects}
    \item[] Question: Does the paper describe potential risks incurred by study participants, whether such risks were disclosed to the subjects, and whether Institutional Review Board (IRB) approvals (or an equivalent approval/review based on the requirements of your country or institution) were obtained?
    \item[] Answer: \answerNA{} 
    \item[] Justification: N/A
    \item[] Guidelines:
    \begin{itemize}
        \item The answer NA means that the paper does not involve crowdsourcing nor research with human subjects.
        \item Depending on the country in which research is conducted, IRB approval (or equivalent) may be required for any human subjects research. If you obtained IRB approval, you should clearly state this in the paper. 
        \item We recognize that the procedures for this may vary significantly between institutions and locations, and we expect authors to adhere to the NeurIPS Code of Ethics and the guidelines for their institution. 
        \item For initial submissions, do not include any information that would break anonymity (if applicable), such as the institution conducting the review.
    \end{itemize}

\item {\bf Declaration of LLM usage}
    \item[] Question: Does the paper describe the usage of LLMs if it is an important, original, or non-standard component of the core methods in this research? Note that if the LLM is used only for writing, editing, or formatting purposes and does not impact the core methodology, scientific rigorousness, or originality of the research, declaration is not required.
    \item[] Answer: \answerNA{} 
    \item[] Justification: N/A
    \item[] Guidelines:
    \begin{itemize}
        \item The answer NA means that the core method development in this research does not involve LLMs as any important, original, or non-standard components.
        \item Please refer to our LLM policy (\url{https://neurips.cc/Conferences/2025/LLM}) for what should or should not be described.
    \end{itemize}

\end{enumerate}

\end{document}